\title{\bf Online Improper Learning with an Approximation Oracle}
\date{}
\newcommand{\A}{\mathcal{A}}
\newcommand{\C}{\mathcal{C}}
\renewcommand{\S}{\mathcal{S}}
\newcommand{\dx}{\mathrm{ d}x}
\newcommand{\dw}{\mathrm{ d}w}
\newcommand{\dy}{\mathrm{ d}y}
\newcommand{\vol}[1]{\mathrm{Vol}(#1)}
\newcommand{\ch}{\mathrm{CH}}
\newcommand{\interior}{\mathrm{int}}
\newcommand{\Reg}{\mathrm{Reg}}
\newcommand{\K}{\ensuremath{\mathcal K}}
\newcommand{\ignore}[1]{}
\theoremstyle{plain}
 \newtheorem{theorem}{Theorem}[section]
 \newtheorem{lemma}[theorem]{Lemma}
 \newtheorem{corollary}[theorem]{Corollary}
  \newtheorem{definition}[theorem]{Definition}
\newtheorem*{remark*}{Remark}
\DeclareMathAlphabet{\mathbfsf}{\encodingdefault}{\sfdefault}{bx}{n}
\DeclareMathOperator*{\argmin}{arg\,min}
\DeclareMathOperator*{\poly}{poly}
\newcommand{\norm}[1]{\|#1\|}
\renewcommand{\O}{\mathcal{O}}
\newcommand{\E}{\mathbb{E}}
\newcommand{\pad}{\mathcal{PAD}}
\newcommand{\reals}{\mathbb{R}}
\renewcommand{\leq}{~\le~}
\renewcommand{\geq}{~\ge~}
\let\oldtfrac\tfrac
\renewcommand{\tfrac}[2]{\smash{\oldtfrac{#1}{#2}}}
\let\nablaold\nabla
\renewcommand{\nabla}{\nablaold\mkern-2.5mu}
\author{
	Elad Hazan\footnote{Princeton University and Google Brain. Email: \texttt{ehazan@cs.princeton.edu}}
	\and
	Wei Hu\footnote{Princeton University. Email: \texttt{huwei@cs.princeton.edu}}
	\and
	Yuanzhi Li\footnote{Princeton University. Email: \texttt{yuanzhil@cs.princeton.edu}}
	\and
	Zhiyuan Li\footnote{Princeton University. Email: \texttt{zhiyuanli@cs.princeton.edu}}
}
\begin{document}

\maketitle

\begin{abstract}
We revisit the question of reducing online learning to approximate optimization of the offline problem. In this setting, we give two algorithms with near-optimal performance in the full information setting:  they guarantee optimal regret and require only poly-logarithmically many calls to the approximation oracle per iteration. Furthermore, these algorithms apply to the more general improper learning problems. 
In the bandit setting, our algorithm also significantly improves the best previously known oracle complexity while maintaining the same regret.
\end{abstract}


\section{Introduction}


One of the most fundamental and well-studied questions in learning theory is whether one can learn a given problem using an optimization oracle.  For online learning in games, it was shown by \cite{kalai2005efficient} that an optimization oracle giving the best decision in hindsight is sufficient for attaining optimal regret. 

However, in many non-convex settings, such an optimization oracle is either unavailable or NP-hard to compute. In contrast, in many such cases, efficient approximation algorithms are usually known, and are guaranteed to return a solution within a certain multiplicative factor of the optimum. These include not only combinatorial optimization problems such as {\sc Max Cut, Weighted Set Cover, Metric Traveling Salesman Problem, Set Packing}, etc., but also machine learning problems such as {\sc Low Rank Matrix Completion}. 

\cite{kakade2009playing} considered the question of whether an approximation algorithm is sufficient to obtain vanishing regret compared with an approximation to the best solution in hindsight. They gave an algorithm for this offline-to-online conversion. However, their reduction is inefficient in the number of per-iteration queries to the approximation oracle, which grows linearly with time. Ideally, an efficient reduction should call the oracle only a constant number of times per iteration and guarantee optimal regret at the same time, and this was considered an open question in the literature.  

Various authors have improved upon this original offline-to-online reduction under certain cases, as we survey below. 
Recently, \cite{garber2017efficient} has made significant progress by giving a more efficient reduction, which improves the number of oracle calls both in the full information and the bandit settings. He explicitly asked whether a near-optimal reduction with only logarithmically many calls per iteration exists.

\subsection{Our Results}

In this paper we resolve this question on the positive side, and in a more general setting. We give two different algorithms in the full information setting, one based on the \emph{online mirror descent (OMD)} method and another based on the \emph{continuous multiplicative weight update (CMWU)} algorithm, which give optimal regret and are oracle-efficient.  Furthermore, our algorithms apply to more general loss vectors. Our results are summarized in the table below. 

\begin{table}[htp]
\caption{Summary of results in the full information setting.}
\begin{center}
\begin{tabular}{|c|c|c|c|}
\hline
Algorithm & $\alpha$-regret in $T$  rounds & oracle complexity per round & loss vectors  \\
\hline
\hline
\cite{kakade2009playing}  & $O(\sqrt{T})$  & $O(T)$  & general  \\
\hline
\cite{garber2017efficient} & $O(\sqrt{T})$  & $O(\sqrt{T} \log T  )$ & non-negative  \\
\hline
Algorithm~\ref{alg:OMD}  (OMD) &  $O(\sqrt{T})$  & $O(\log T  )$ & PNIP property  \\
\hline
Algorithm~\ref{alg:improper_CMWU} (CMWU) & $\widetilde O(\sqrt{T})$  & $O(\log T)$ &  general \\
\hline
\end{tabular}
\end{center}
\label{default}
\end{table}%

In addition to these two algorithms, we give an improved bandit algorithm based on OMD: it attains the same $O(T^{2/3})$ regret as in \citep{kakade2009playing,garber2017efficient} with a lower computational cost: our method requires $\widetilde O(T^{2/3})$ oracle calls over all the $T$ game iterations, as opposed to $\widetilde O(T)$ in the previous best method.

Besides the improved oracle complexity, our methods have the following additional advantages:
\begin{itemize}
\item
While the algorithm in \citep{garber2017efficient} requires non-negative loss vectors, our second algorithm, based on CMWU, can work with general loss vectors. Furthermore, our OMD-based algorithm can also work with loss vectors from any convex cone satisfying the \emph{pairwise non-negative inner product (PNIP)} property defined in Definition~\ref{def:pnip} (together with an appropriately chosen regularizer), which is more general than the non-negative orthant. 

\item
Our methods apply to a general online improper learning setting, in which the predictions can be from a potentially different set from the target set to compete against. Previous work considered this different set to be a constant multiple of the target set, which makes sense primarily for combinatorial optimization problems.

However, in many interesting problems, such as {\sc Low Rank Matrix Completion}, the natural approximation algorithm returns a matrix of higher rank. This is not in a constant multiple of the set of all low rank matrices, and our additional generality allows us to obtain meaningful results even for this case. 

\item
Our first algorithm is based on the general OMD methodology, and thus allows any strongly convex regularizer. This can give better regret bounds, in terms of the space geometry, compared with the previous algorithm of \citep{garber2017efficient} that is based on online gradient descent and Euclidean regularization. The improvement in regret bounds can be as large as the dimension. 
\item
Our bandit algorithm is based on OMD with a new regularizer that is inspired from the construction of barycentric spanners, and may be of independent interest. 
\end{itemize}

\subsection{Our Techniques} 

The more general one of our algorithms is based on a completely different methodology compared with previous online-to-offline reductions. It is a variant of the \emph{continuous multiplicative weight update (CMWU)} algorithm, or the \emph{continuous hedge} algorithm. Our idea is to apply CMWU over a superset of the target set, and in every iteration the algorithm tries to play the mean of a log-linear distribution. To check feasibility of this mean,  we show how to design a \emph{separation-or-decomposition oracle}, which either certifies that the mean is infeasible - in this case it provides a separating hyperplane between the mean and the target set and thus gives a more refined superset of the target set, or provides a distribution over feasible points whose average is superior to the mean in terms of the regret.
Using this approach, the more oracle calls the algorithm makes, the tighter superset it can obtain, and we show an interesting trade-off between the oracle complexity and the regret bound.

The other algorithm follows the line of \cite{garber2017efficient}. We show how to significantly speed up Garber's \emph{infeasible projection oracle}, and to generalize Garber's algorithm from \emph{online gradient descent (OGD)} to \emph{online mirror descent (OMD)}.

This additional generality is crucial in our bandit algorithm, where we make use of a novel regularizer in OMD, called the \emph{barycentric regularizer}, in order to have a low-variance unbiased estimator of the loss vector. This geometric regularization may be of independent interest.

\subsection{Related Work}

The reduction from online learning to offline approximation algorithms was already considered by \cite{kalai2005efficient}. Their scheme, based on the follow-the-perturbed-leader (FTPL)  algorithm, requires very strong approximation guarantee from the approximation oracle, namely, a fully polynomial time approximation scheme (FPTAS), and requires an approximation that improves with time.  \cite{balcan2006approximation} used the same approach in the context of mechanism design. 

\cite{kalai2005efficient} also proposed a specialized reduction that works under certain conditions on the approximation oracle, satisfied by some known algorithms for problems such as MAX-CUT. \cite{fujita2013combinatorial} further gave more general reductions that apply to problems whose approximation algorithms are based on convex relaxations of mathematical programs. Their scheme is also based on the FTPL method. 

Recent advancements on 
black-box online-to-offline reductions 
were made in \citep{kakade2009playing,dudik2016oracle,garber2017efficient}. \cite{hazan2016computational} showed that efficient reductions are in general impossible, unless special structure is present. In the settings we consider this special structure is a linear cost function over the space.  

Our algorithms fall into one of two templates. The first is the online mirror descent method, which is an adaptive version of the follow-the-regularized-leader (FTRL) algorithm. The second is the continuous multiplicative weight update method, which dates back to Cover's portfolio selection method \citep{cover1991universal} and Vovk's aggregating algorithm \citep{Vovk:1990:AS:92571.92672}.  The reader is referred to the books \citep{cesa2006prediction,shalev2012online,hazan2016introduction} for details and background on these prediction frameworks.   
We also make use of polynomial-time algorithms for sampling from log-concave distributions \citep{lovasz2007geometry}.

\section{Preliminaries}\label{sec:preliminary}

We use $\|x\|$ to denote the Euclidean norm of a vector $x$.
For $x\in \reals^d$ and $r>0$, denote by $B(x, r)$ the Euclidean ball in $\reals^d$ of radius $r$ centered at $x$, i.e., $B(x, r) := \{x'\in\reals^d: \|x-x'\|\le r\}$.
For $\S, \S' \subseteq \reals^d$, $\beta \in \reals$, $y\in \reals^d$ and $A\in \reals^{d'\times d}$, define $\S+\S' := \{x+x': x\in \S, x'\in \S'\}$, $\beta \S := \{\beta x: x\in \S\}$, $x+\S := \{x+y: y\in \S\}$, and $A\S := \{Ax: x\in\S\}$.
The convex hull of $\S\subseteq \reals^d$ is denoted by $\ch(\S)$.
Denote by $\vol{\S}$ the volume (Lebesgue measure) of a set $\S \subseteq \reals^d$.
Denote by $\Delta^{k-1}$ the probability simplex in $\reals^k$, i.e., $\Delta^{k-1} := \left\{x\in\reals^k: x_i\ge 0 \ (\forall i), \sum_i x_i = 1 \right\}$.

A set $\C \subseteq \reals^d$ is called a \emph{cone} if for any $\beta\ge0$ we have $\beta \C \subseteq \C$. 
For any $\S \subseteq \reals^d$, define the \emph{dual cone} of $\S$ as $\S^\circ := \left\{y\in\reals^d: x^\top y\geq 0,\ \forall x\in \S \right\}$.
$\S^\circ$ is always a convex cone, even when $\S$ is neither convex nor a cone.

For any closed set $\S \subseteq \reals^d$, define $\Pi_\S:\reals^d\to \S$ to be the \emph{projection} onto $\S$, namely $\Pi_\S(x):= \argmin_{x'\in \S} \norm{x'-x}^2 .$
The well-known Pythagorean theorem characterizes an important property of projections onto convex sets:

\begin{lemma}[Pythagorean theorem] \label{lem:pythagorean}
	For any closed convex set $\S \subseteq \reals^d$, $x\in \reals^d$ and $y\in \S$, we have
	$(\Pi_{\S}(x) - x)^\top (\Pi_{\S}(x) - y) \le 0$, or equivalently, $\norm{x-y}^2 \ge \norm{\Pi_{\S}(x) - x}^2 + \norm{\Pi_{\S}(x) - y}^2$.
\end{lemma}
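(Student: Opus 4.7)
The plan is to first establish the first-order optimality condition $(\Pi_{\S}(x) - x)^\top (\Pi_{\S}(x) - y) \le 0$ by the standard variational argument that exploits convexity of $\S$, and then obtain the ``Pythagorean'' inequality as a one-line algebraic consequence.

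Write $z := \Pi_{\S}(x)$, which is the unique minimizer of $\norm{x' - x}^2$ over $x' \in \S$ (existence and uniqueness follow from $\S$ being closed and convex together with strict convexity and coercivity of $\norm{\cdot - x}^2$, so the argmin is well-defined). For any $y \in \S$ and any $t \in [0,1]$, convexity of $\S$ gives $z_t := (1-t)z + ty = z + t(y-z) \in \S$. Optimality of $z$ then yields $\norm{x - z_t}^2 \ge \norm{x - z}^2$. Expanding,
\begin{equation*}
\norm{x - z - t(y - z)}^2 \eq \norm{x - z}^2 - 2t\,(x - z)^\top (y - z) + t^2 \norm{y - z}^2,
\end{equation*}
canceling $\norm{x - z}^2$, dividing by $t > 0$, and letting $t \to 0^+$, we obtain $(x - z)^\top (y - z) \le 0$. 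Flipping the sign in both factors gives the equivalent form $(z - x)^\top (z - y) \le 0$, which is the claimed inequality.

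For the second form, decompose $x - y = (x - z) + (z - y)$ and expand:
\begin{equation*}
\norm{x - y}^2 \eq \norm{x - z}^2 + \norm{z - y}^2 + 2\,(x - z)^\top (z - y).
\end{equation*}
By the inequality just proved, $(x - z)^\top (z - y) = -(x - z)^\top(y - z) \ge 0$, so the cross term is nonnegative and we conclude $\norm{x - y}^2 \ge \norm{\Pi_{\S}(x) - x}^2 + \norm{\Pi_{\S}(x) - y}^2$. There is no real obstacle in this proof: it is a classical textbook calculation whose only conceptual content is the infinitesimal feasibility move from $z$ toward $y$ enabled by convexity of $\S$; the rest is algebraic.
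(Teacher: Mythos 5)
Your proof is correct. The paper states this Pythagorean theorem as a classical fact without providing its own proof, and your argument is the standard one: the variational step (moving from $\Pi_{\S}(x)$ toward $y$ inside $\S$, expanding the square, dividing by $t$, and letting $t \to 0^+$) correctly yields $(\Pi_{\S}(x) - x)^\top (\Pi_{\S}(x) - y) \le 0$, and the norm inequality then follows from the decomposition $x - y = (x - \Pi_{\S}(x)) + (\Pi_{\S}(x) - y)$ with a nonnegative cross term. There is nothing to compare against in the paper, and no gap in your argument.
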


\begin{definition}
	A function $f: \A \to \reals$ ($\A\subseteq \reals^d$) is \emph{Legendre} if
	\begin{itemize}
		\item $\A$ is convex;
		\item $f$ is strictly convex with continuous gradient defined over $\A$'s interior $\interior(\A)$;
		\item for any sequence $x_1, x_2, \cdots \in \A$ converging to a boundary point of $\A$, $\lim_{n\to\infty}\norm{\nabla f(x_n)} = \infty$.
	\end{itemize}
\end{definition}

\begin{definition}
For a Legendre function $\varphi: \A  \to \reals$, the \emph{Bregman divergence} with respect to $\varphi$ is defined as $D_\varphi(x, y) := \varphi(x) - \varphi(y) - \nabla \varphi(y)^\top(x-y)$ ($\forall x, y\in \A$).
\end{definition}


The Pythagorean theorem can be generalized to projections with respect to a Bregman divergence (see e.g. Lemma 11.3 in \citep{cesa2006prediction}):
\begin{lemma}[Generalized Pythagorean theorem] \label{lem:generalized-pythagorean}
	For any closed convex set $\S \subseteq \reals^d$, $x\in \reals^d$, $y\in \S$, and any Legendre function $\varphi:\reals^d\to \reals$, letting $z = \argmin_{x' \in \S} D_\varphi(x', x)$, we must have
	$D_\varphi(y, x) \ge D_\varphi(y, z) + D_\varphi(z, x)$.
\end{lemma}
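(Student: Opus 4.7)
The plan is to prove this in two moves: first establish a first-order optimality condition for the Bregman projection $z$, then rearrange the three Bregman divergences algebraically so that their difference is exactly the quantity guaranteed nonnegative by optimality.

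First I would observe that $z = \argmin_{x' \in \S} D_\varphi(x', x)$ is well-defined and lies in the interior of the domain of $\varphi$. Existence follows because $\S$ is closed and convex, and the Legendre property forces $D_\varphi(\cdot, x) \to \infty$ as one approaches the boundary of the domain (since $\|\nabla \varphi\| \to \infty$ there), so the infimum is attained at an interior point where $\nabla \varphi(z)$ is defined. Next, since $D_\varphi(x', x)$ is a strictly convex function of $x'$ (because $\varphi$ is) and $\S$ is convex, the first-order optimality condition for a constrained minimum gives
\[
\left( \nabla_{x'} D_\varphi(x', x) \big|_{x' = z} \right)^\top (y - z) \;\ge\; 0 \qquad \text{for every } y \in \S.
\]
Computing the gradient, $\nabla_{x'} D_\varphi(x', x) = \nabla \varphi(x') - \nabla \varphi(x)$, so this reads
\[
\bigl( \nabla \varphi(z) - \nabla \varphi(x) \bigr)^\top (y - z) \;\ge\; 0. \tag{$\star$}
\]

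The second step is a direct calculation: expand
\[
D_\varphi(y, x) - D_\varphi(y, z) - D_\varphi(z, x)
\]
using the definition of Bregman divergence. The $\varphi(y)$, $\varphi(z)$, and $\varphi(x)$ terms all cancel, and what remains after grouping is
\[
-\nabla\varphi(x)^\top(y-x) + \nabla\varphi(z)^\top(y-z) + \nabla\varphi(x)^\top(z-x) = \bigl(\nabla \varphi(z) - \nabla \varphi(x)\bigr)^\top (y - z),
\]
which is nonnegative by $(\star)$. This yields $D_\varphi(y, x) \ge D_\varphi(y, z) + D_\varphi(z, x)$.

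The only non-routine part is justifying that the projection lies in the interior of the domain, which is really the whole reason the Legendre hypothesis appears in the statement; everything else is a one-line first-order optimality argument followed by bookkeeping. Because the lemma is stated with $\varphi: \reals^d \to \reals$ the boundary issue is vacuous, but I would phrase the argument to cover the general Legendre setting so it is reusable for the mirror-descent analyses later in the paper.
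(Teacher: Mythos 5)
Your proof is correct and is essentially the standard argument: the paper itself does not prove this lemma but defers to Lemma 11.3 of \citep{cesa2006prediction}, whose proof is exactly your combination of the first-order optimality condition for the Bregman projection with the algebraic expansion you carry out (the same three-point identity the paper records as \eqref{eqn:three-point-bregman}). The attainment/interior caveat you flag is indeed vacuous for $\varphi:\reals^d\to\reals$ as stated, so nothing further is needed.
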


\paragraph{Log-concave distributions.}
A distribution over $\reals^d$ with a density function $f$ is said to be \emph{log-concave} if $\log(f)$ is a concave function.
For a convex set $\S$ equipped with a membership oracle, there exist polynomial-time algorithms for sampling from any log-concave distribution over $\S$ \citep{lovasz2007geometry}. This can be used to approximately compute the mean of any log-concave distribution. 

We have the following classical result  which says that every half-space close enough to the mean of a log-concave distribution must contain at least constant probability mass.
For simplicity, we only state and prove the result for isotropic (i.e., identity covariance) log-concave distributions, but the result can be easily generalized to allow arbitrary covariance.


\begin{lemma} \label{lem:center_cut} 
Consider any isotropic (identity covariance) log-concave distribution $p$ over $\mathbb{R}^d$ with mean $x^*$. Then for any half-space $H$ such that $\norm{ x^* - \Pi_{H}(x^*) } \le \frac{1}{2e}  $, we have $\int_{H} p(x) \dx \ge \frac{1}{2e} $.
\end{lemma}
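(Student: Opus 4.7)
The plan is to reduce to a one-dimensional question by projecting $p$ onto the direction normal to the half-space, and then use two standard facts about 1D log-concave distributions.

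Without loss of generality write $H = \{x \in \reals^d : a^\top x \le b\}$ with $\|a\| = 1$, so that $\|x^* - \Pi_H(x^*)\| = \max(a^\top x^* - b, 0)$. Let $X \sim p$ and consider the scalar $Y = a^\top X$, whose density $f_Y$ is 1D log-concave (log-concavity is preserved under marginalization of a jointly log-concave density). Since $p$ is isotropic, $Y$ has mean $\mu_Y = a^\top x^*$ and variance $a^\top I a = 1$. Writing $t = \max(\mu_Y - b, 0) \le \frac{1}{2e}$, the goal reduces to showing $\Pr[Y \le b] \ge \frac{1}{2e}$.

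Next I would invoke two well-known properties of 1D log-concave distributions, both of which can be found in the Lovász--Vempala survey cited in the paper. First, for any 1D log-concave random variable, each half-line through the mean carries probability mass at least $1/e$; in particular $\Pr[Y \le \mu_Y] \ge 1/e$. Second, a 1D log-concave density with variance $\sigma^2$ satisfies $\max_y f_Y(y) \le 1/\sigma$, so in our unit-variance setting $\max_y f_Y(y) \le 1$.

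Putting these together handles two cases. If $\mu_Y \le b$, i.e.\ $x^* \in H$, then trivially $\Pr[Y \le b] \ge \Pr[Y \le \mu_Y] \ge 1/e \ge 1/(2e)$. Otherwise, $t = \mu_Y - b \in (0, 1/(2e)]$, and writing the probability of the shifted half-line as
\begin{equation*}
\Pr[Y \le b] \eq \Pr[Y \le \mu_Y] - \Pr[\mu_Y - t < Y \le \mu_Y] \geq \tfrac{1}{e} - t \cdot \max_y f_Y(y) \geq \tfrac{1}{e} - \tfrac{1}{2e} \eq \tfrac{1}{2e},
\end{equation*}
which is the desired bound. The main (very mild) obstacle is picking out the correct constants in these two classical inequalities so that the conclusion comes out exactly as $1/(2e)$; both bounds are tight enough in their standard forms to make the numerics work, so the rest of the argument is essentially a one-line calculation.
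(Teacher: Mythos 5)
Your proof is correct and follows essentially the same route as the paper's: project onto the normal direction, invoke the Lovász--Vempala facts that a half-line through the mean of a 1D log-concave variable carries mass at least $1/e$ and that an isotropic 1D log-concave density is bounded by $1$, and subtract at most $\frac{1}{2e}$ for the shift. The only (cosmetic) difference is that you treat the case $x^*\in H$ explicitly, which the paper handles implicitly.
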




The proof of Lemma~\ref{lem:center_cut} is given in Appendix~\ref{app:proof-prelim}.
As an implication, we have the following lemma regarding mean computation of a log-concave distribution, which is useful in this paper.

\begin{lemma}\label{lem:center-cut-to-use}
	For any log-concave distribution $p$ in $\reals^d$ with mean $x^*$, whose support $\textrm{supp}(p)$ is in $B(0,R)$ ($R>0$), and any $\epsilon>0$ and $0<\delta<1$, it is possible to compute a point $\tilde{x}^*$ in $\poly\left(d, \frac1\epsilon, \log \frac1\delta \right)$ time such that with probability at least $1-\delta$ we have: 
	\begin{enumerate}
		\item 
		$\norm{\tilde{x}^*-x^*} \le R\epsilon$;
		\item for any half space $H$ containing $\tilde{x}^*$, $\int_{H}p(x) \dx \ge \frac{1}{2e}$.
	\end{enumerate}
\end{lemma}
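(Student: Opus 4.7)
The plan is to reduce the statement to Lemma~\ref{lem:center_cut} via a whitening change of variables, and to produce $\tilde{x}^*$ as an empirical mean over $N = \poly(d, 1/\epsilon, \log(1/\delta))$ approximately iid samples from $p$ obtained via the Lov\'asz--Vempala sampler \citep{lovasz2007geometry}. The sampler's joint output distribution can be driven within total variation $\delta/4$ of $p^{\otimes N}$ in polynomial time, so I may treat the samples $X_1,\dots,X_N$ as exactly iid from $p$ at the cost of an additive $\delta/4$ failure probability, and set $\tilde{x}^* := \tfrac{1}{N}\sum_i X_i$.

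Condition~1 is the easier one. Since each $X_i \in B(0,R)$, each coordinate of $X_i - x^*$ lies in $[-2R,2R]$, and a coordinate-wise Hoeffding bound combined with a union bound over the $d$ coordinates gives $\|\tilde{x}^* - x^*\| \le R\epsilon$ with probability at least $1-\delta/4$ once $N = O(d\epsilon^{-2}\log(d/\delta))$.

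For condition~2, let $\Sigma$ be the covariance of $p$ and consider the whitening map $T: x \mapsto \Sigma^{-1/2}(x-x^*)$, so that $q := T_\ast p$ is isotropic log-concave with mean $0$. For any half-space $H \ni \tilde{x}^*$, its image $H' := T(H)$ is a half-space containing $\tilde{y}^* := T(\tilde{x}^*)$, and hence the distance from $0$ to $H'$ is at most $\|\tilde{y}^*\| = \|\tilde{x}^* - x^*\|_{\Sigma^{-1}}$. Provided this Mahalanobis distance is at most $1/(2e)$, Lemma~\ref{lem:center_cut} applied to $q$ and $H'$ yields $\int_{H'} q(y)\,\mathrm{d}y \ge 1/(2e)$, which by change of variables equals $\int_H p(x)\dx$, giving exactly condition~2.

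It thus suffices to show $\|\tilde{x}^* - x^*\|_{\Sigma^{-1}} \le 1/(2e)$ with high probability. The key computation is $\E\|\tilde{x}^* - x^*\|_{\Sigma^{-1}}^2 = \tfrac{1}{N}\trace(\Sigma^{-1}\Sigma) = d/N$, so a Markov-type bound gives constant success probability already at $N = \Theta(d)$, and this can be boosted to $1-\delta/4$ either by a geometric-median-of-means construction across $O(\log(1/\delta))$ independent batches or by invoking the sub-exponential concentration of linear functionals of log-concave distributions. The anticipated obstacle is that $\Sigma$ is not known a priori; this is the classical isotropization problem, and the Lov\'asz--Vempala rounding machinery returns at polynomial cost a matrix $\hat\Sigma$ with $\tfrac12\Sigma \preceq \hat\Sigma \preceq 2\Sigma$, after which the whole analysis can be carried out in the $\hat\Sigma^{-1}$ norm at the expense of only a constant factor in the effective radius. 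A union bound over the three failure events concludes the proof.
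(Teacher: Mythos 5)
Your proposal is correct and follows the route the paper itself intends: the paper gives no explicit proof of this lemma, presenting it as a direct implication of Lemma~\ref{lem:center_cut} (extended to arbitrary covariance by whitening) together with the Lov\'asz--Vempala sampling/rounding machinery it cites, which is exactly what you spell out, including the key observation that any half-space through $\tilde{x}^*$ lies within Mahalanobis distance $\|\tilde{x}^*-x^*\|_{\Sigma^{-1}}$ of the mean and the bound $\E\|\tilde{x}^*-x^*\|_{\Sigma^{-1}}^2=d/N$. The only small point to tidy is that the final estimator must satisfy both guarantees simultaneously: if you boost condition~2 with a geometric median of batch means, verify condition~1 for that same point, e.g.\ by noting $w^\top\Sigma w\le R^2$ for unit $w$ (support in $B(0,R)$), so a Mahalanobis error of $\min\{\tfrac{1}{2e},\epsilon\}$ already implies the Euclidean bound $R\epsilon$.
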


For our purpose in this paper, it always suffices to choose $\epsilon = \frac1T$ and  $\delta = \frac{1}{\poly(T)}$ ($T$ being the total number of rounds) without hurting our regret bounds.
Therefore, for ease of presentation, we will assume that we can compute the mean of bounded-supported log-concave distributions exactly.

\section{Online Improper Linear Optimization with an Improper Optimization Oracle}

Now we describe the problem setting we consider in this paper.
Let $\K, \K^* \subseteq B(0, R)$ ($R>0$) be two compact subsets of $\reals^d$,
and let $W \subseteq \reals^d$ be a convex cone.
Suppose we have an \emph{improper linear optimization oracle} $\O_{\K, \K^*}:W\to \K$, which 
given an input $v \in W$ can output a point $\O_{\K, \K^*}(v) \in \K$ such that 
$$ v^\top \O_{\K, \K^*}(v) \leq \min_{x^* \in \K^* }  v^\top x^*. $$
In other words, it performs linear optimization over $\K^*$ but is allowed to output a point from a (possibly different) set $\K$.
Note that this implicitly requires that $\K$ ``dominates'' $\K^*$ in all directions in $W$, that is, for all $v \in W$ we must have $\min_{x\in \K} v^\top x\leq \min_{x^*\in \K^*} v^\top x^*$.



\paragraph{Online improper linear optimization.}
Consider a repeated game with $T$ rounds. In round $t$, the player chooses a point $x_t \in \K$ while an adversary chooses a loss vector $f_t \in W \cap B(0, L)$ ($L>0$), and then the player incurs a loss $f_t^\top x_t$.
The goal for the player is to have a cumulative loss that is comparable to that of the best single decision in hindsight.

We assume that the player only has access to the optimization oracle $\O_{\K, \K^*}$.
Therefore, it is only fair to compare with the best decision in $\K^*$ in hindsight.
The (improper) regret over $T$ rounds is defined as
\begin{equation*} 
\Reg_{\K, \K^*}(T) := \sum_{t=1}^T f_t^\top x_t - \min_{x^* \in \K^*} \sum_{t=1}^T f_t^\top x^*.  
\end{equation*}
We sometimes treat $f_t$ as a function on $\reals^d$, i.e., $f_t(x) := f_t^\top x$.

\paragraph{Full information and bandit settings.}
We consider both \emph{full information} and \emph{bandit} settings.
In the full information setting, after the player makes her choice $x_t$ in round $t$, the entire loss vector $f_t$ is revealed to the player;
in the bandit setting, only the loss value $f_t(x_t)$ is revealed to the player.

\paragraph{$\alpha$-regret minimization with an approximation oracle.}

The problem of online linear optimization with an approximation oracle considered by \cite{kakade2009playing} and \cite{garber2017efficient} is a special instance in our online improper linear optimization framework.
In this problem, the player has access to an \emph{approximate linear optimization oracle} $\O_\K^\alpha$ over $\K$ ($\alpha>1$), which given a direction $v \in W$ as input can output a point $\O_\K^\alpha(v) \in \K$ such that 
\begin{equation*}
v^\top \O_{\K}^\alpha(v) \leq \alpha \cdot \min_{x \in \K }  v^\top x. 
\end{equation*}
In this setting we will consider $\K \subset \reals_+^d$ and $W = \reals_+^d$; many combinatorial optimization problems with efficient approximation algorithms fall into this regime.
The goal in the online problem
is therefore to minimize the \emph{$\alpha$-regret}, defined as
\begin{equation*} 
\Reg^\alpha_\K(T) := \sum_{t=1}^T f_t^\top x_t - \alpha \min_{x \in \K} \sum_{t=1}^T f_t^\top x.  \end{equation*}
To see why this is a special case of online improper linear optimization, note that we can take $\K^* = \alpha \K$ and then the approximation oracle $\O_\K^\alpha$ is equivalent to $\O_{\K, \alpha \K}$ and the $\alpha$-regret $\Reg^\alpha_\K(T)$ is equal to the improper regret $\Reg_{\K, \alpha \K}(T)$.

\section{Efficient Online Improper Linear Optimization via Online Mirror Descent}\label{sec:OMD}

In this section, we give an efficient online improper linear optimization algorithm (in the full information setting) based on
online mirror descent (OMD) equipped with a strongly convex regularizer $\varphi$, which achieves $O(\sqrt{T})$ regret when the regularizer $\varphi$ and the domain of linear loss functions $W$ satisfy the \emph{pairwise non-negative inner product (PNIP)} property (Definition~\ref{def:pnip}). 
This property holds for many interesting domains with appropriately chosen regularizers. Notable examples include the non-negative orthant $\reals^d_+$, the positive semidefinite matrix cone, and the Lorentz cone $L_{d+1}=\{(x,z)\in \reals^{d}\times\reals:\|x\|_2\le z\}$.

\begin{definition}[Pairwise non-negative inner product] \label{def:pnip}
For a twice-differentiable Legendre function $\varphi:\A \to \reals$ with domain $\A\subseteq\reals^d$ and a convex cone $W\subseteq \reals^d$, we say $(\varphi,W)$ satisfies the \emph{pairwise non-negative inner product (PNIP)} property, if for all $w, w' \in W$ and $ H\in \ch(\mathcal{H})$, where $\mathcal{H}=\{\nabla^2 \varphi(x):x\in \A\}$, it holds that $w^\top H^{-1} w'\geq 0$.
\end{definition}


\paragraph{Examples.}
$(\varphi, W)$ satisfies the PNIP property if:
\begin{itemize}
	\item $\varphi(x)=\frac{1}{2}\norm{x}^2$ (with domain $\reals^d$) and  $W\subseteq W^\circ$;
	\item $\varphi(x)=\sum_{i=1}^d x_i(\log x_i-1)$ (with domain $\reals_+^d$) and $W=\reals^d_+$;
	\item $\varphi(x)= \frac{1}{2}x^\top Q^{-1}x$ (with domain $\reals^d$),  where $Q = MM^\top$, $M\in\reals^{d\times d}$ is an invertible matrix, and $W = (M^\top)^{-1}\reals_+^d$.
	This is useful in our bandit algorithm in Section~\ref{sec:bandit}.
\end{itemize}


\subsection{Online Mirror Descent with a Projection-and-Decomposition Oracle}

We first show that assuming the availability of a \emph{projection-and-decomposition (PAD)} oracle (Definition~\ref{def:projection_oracle}), we can implement a variant of the OMD algorithm that achieves optimal regret.
In Section~\ref{subsec:pad}, we show how to construct a PAD oracle  using the oracle $\O_{\K,\K^*}$.
In Section~\ref{subsec:omd-number-calls}, we bound the number of oracle calls to $\O_{\K,\K^*}$ in our algorithm.

\begin{definition}[Projection-and-decomposition oracle]\label{def:projection_oracle}
A \emph{projection-and-decomposition (PAD)} oracle onto $\K^*$,
$\pad(y, \epsilon, W, \varphi)$,
 is defined as a procedure that given $y \in \reals^d$, $\epsilon>0$, a convex cone $W$ and a Legendre function $\varphi$ produces a tuple $(y',V,p) $, where $y' \in \reals^d$, $V = (v_1, \ldots, v_k) \in \reals^{d\times k}$ and $p=(p_1, \ldots, p_k)^\top\in\Delta^{k-1}$, such that:
\begin{enumerate}
\item $y'$ is ``closer'' to $\K^*$ than $y$ with respect to the Bregman divergence of $\varphi$ (and hence is an ``infeasible projection''):
$ \forall x^* \in \K^* , \  D_\varphi(x^*,y') \le D_\varphi(x^*,y)   $;
\item $v_1, \ldots, v_k\in \K$, and $\sum_{i=1}^k p_iv_i$ is a point that ``almost dominates'' $y'$ in all directions in $W$. In other words, there exists $c\in W^\circ$ such that
$  \norm{ \sum_{i=1}^k p_i v_i + c - y'  } \leq \epsilon $.
\end{enumerate}
\end{definition}

The purpose of the PAD oracle is the following.
Suppose the OMD algorithm tells us to play a point $y$.
Since $y$ might not be in the feasible set $\K$, we can call the PAD oracle to find another point $y'$ as well as a distribution $p$ over points $v_1, \ldots, v_k \in \K$.
The first property in Definition~\ref{def:projection_oracle} is sufficient to ensure that playing $y'$ also gives low regret, and the second property further ensures that we have a distribution of points in $\K$ that suffers less loss than $y'$ for every possible loss function so we can play according to that distribution.


Using the PAD oracle, we can apply OMD as in Algorithm~\ref{alg:OMD}. Theorem~\ref{thm:OMD_regret} gives its regret bound.

\begin{algorithm}[htbp]
\caption{Online Mirror Descent using a Projection-and-Separation Oracle}
\label{alg:OMD}
\begin{algorithmic}[1]
	\REQUIRE Learning rate $\eta>0$, tolerance $\epsilon>0$, regularizer $\varphi$, convex cone $W$, time horizon $T \in \mathbb N_+$
	\STATE $y_1\gets \argmin_{y\in \mathrm{Dom}(\varphi)}\varphi(y)$.
	\FOR{$t=1$ \TO $T$}
    	\STATE  $(x_{t}, V,p)\gets \pad(y_{t},\epsilon,W,\varphi) $
		\STATE Play $\tilde x_t = v_i$ with probability $p_i$ ($i\in[k]$), where $V = (v_1, \ldots, v_k)$, and observe the loss vector $f_t$
		\STATE $\nabla \varphi(y_{t+1}) \leftarrow \nabla \varphi (x_t) - \eta f_t $ 
	\ENDFOR
\end{algorithmic}
\end{algorithm}

\begin{theorem}\label{thm:OMD_regret}
	Suppose $(\varphi,W)$ satisfies the PNIP property (Definition~\ref{def:pnip}).
	Then for any $\epsilon, \eta>0$, Algorithm \ref{alg:OMD} satisfies the following regret guarantee: 
	
\begin{align*}
\forall x^* \in \K^*: \quad  \E\left[\sum_{t=1}^T ( f_t(\tilde x_t)-f_t(x^*) ) \right] \le \frac{1}{\eta}\left(\varphi(x^*)-\varphi(y_1)+\sum_{t=1}^T D_{\varphi}(x_t,y_{t+1})\right) +\epsilon LT.
\end{align*}

In particular, if $\varphi$ is $\mu$-strongly convex and $A \ge \max_{x^* \in \K^*} (\varphi(x^*)-\varphi(y_1))$,
setting $\epsilon = \frac{R}{T}$ and $\eta = \frac1L\sqrt{\frac{2\mu A}{T}}$, we have
\[
\forall x^* \in \K^*: \quad  \E\left[\sum_{t=1}^T ( f_t(\tilde x_t)-f_t(x^*) ) \right] \le
 L\sqrt{\frac{2AT }{\mu}}+LR.
\]
\end{theorem}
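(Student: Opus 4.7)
The plan is to couple the standard online mirror descent regret argument to the two guarantees of the PAD oracle. First I would strip away the randomization: since $\tilde x_t\sim p$, linearity of expectation gives $\E[f_t(\tilde x_t)] = f_t\!\left(\sum_{i=1}^k p_i v_i\right)$. By Property~2 of the PAD oracle there exists $c\in W^\circ$ with $\|\sum_i p_i v_i + c - x_t\|\le\epsilon$, and since $f_t\in W$ the very definition of the dual cone gives $f_t^\top c\ge 0$. Together with $\|f_t\|\le L$ and Cauchy--Schwarz this yields $\E[f_t(\tilde x_t)]\le f_t(x_t)+L\epsilon$, contributing the $\epsilon LT$ error after summing. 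It therefore suffices to control $\sum_t f_t^\top(x_t-x^*)$ deterministically.

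For that, I would apply the standard three-point Bregman identity to the triple $(x^*, x_t, y_{t+1})$, namely $(\nabla\varphi(x_t)-\nabla\varphi(y_{t+1}))^\top(x^*-x_t) = D_\varphi(x^*, y_{t+1}) - D_\varphi(x^*, x_t) - D_\varphi(x_t, y_{t+1})$. Substituting the OMD update $\nabla\varphi(x_t)-\nabla\varphi(y_{t+1})=\eta f_t$ and rearranging gives
\[
\eta f_t^\top(x_t-x^*) \eq D_\varphi(x^*, x_t) - D_\varphi(x^*, y_{t+1}) + D_\varphi(x_t, y_{t+1}).
\]
Property~1 of the PAD oracle applied at round $t$ gives $D_\varphi(x^*, x_t)\le D_\varphi(x^*, y_t)$ for every $x^*\in\K^*$, so the differences telescope cleanly: $\sum_{t=1}^T [D_\varphi(x^*, x_t) - D_\varphi(x^*, y_{t+1})] \le D_\varphi(x^*, y_1) - D_\varphi(x^*, y_{T+1})\le D_\varphi(x^*, y_1)$. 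Since $y_1$ minimizes $\varphi$, $\nabla\varphi(y_1)=0$ and $D_\varphi(x^*, y_1) = \varphi(x^*)-\varphi(y_1)$. Combining this with the randomization step of the previous paragraph yields the first displayed inequality of the theorem.

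For the refined parameter-tuned bound, I would bound each $D_\varphi(x_t, y_{t+1})$ via the conjugate identity $D_\varphi(x_t, y_{t+1}) = D_{\varphi^*}(\nabla\varphi(y_{t+1}), \nabla\varphi(x_t))$. Because $\mu$-strong convexity of $\varphi$ implies $(1/\mu)$-smoothness of $\varphi^*$, this divergence is at most $\tfrac{1}{2\mu}\|\nabla\varphi(x_t)-\nabla\varphi(y_{t+1})\|^2 = \tfrac{\eta^2}{2\mu}\|f_t\|^2\le\tfrac{\eta^2 L^2}{2\mu}$. Substituting into the first inequality and plugging in $\eta = L^{-1}\sqrt{2\mu A/T}$, $\epsilon = R/T$ is then a one-line optimization that produces $L\sqrt{2AT/\mu}+LR$.

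The main subtlety I anticipate is that PAD returns an \emph{infeasible} rather than a feasible projection: the classical OMD telescoping relies on the generalized Pythagorean theorem $D_\varphi(x^*, y_{t+1})\ge D_\varphi(x^*, \Pi_\K y_{t+1}) + D_\varphi(\Pi_\K y_{t+1}, y_{t+1})$, whereas we only have the weaker one-sided bound from Property~1. It is therefore important that Property~1 be applied to the pre-step iterate $y_t$ (so that $D_\varphi(x^*, x_t)\le D_\varphi(x^*, y_t)$) rather than to $y_{t+1}$, which is precisely what makes the differences telescope along the sequence $\{y_t\}$. The PNIP hypothesis does not enter this regret computation directly; it will be needed in Section~\ref{subsec:pad} to show that such a PAD oracle can in fact be built from $\O_{\K, \K^*}$.
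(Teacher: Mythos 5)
Your proof is correct and follows essentially the same route as the paper's: the same handling of the randomized play via Property~2 and $f_t^\top c\ge 0$, the same three-point Bregman identity combined with Property~1 applied at $y_t$ to telescope, and the same strong-convexity bound $D_\varphi(x_t,y_{t+1})\le \eta^2L^2/(2\mu)$ before tuning $\eta$ and $\epsilon$. The only cosmetic differences are that the paper bounds $D_\varphi(x^*,y_1)\le\varphi(x^*)-\varphi(y_1)$ via the first-order optimality inequality $\nabla\varphi(y_1)^\top(x^*-y_1)\ge0$ rather than asserting $\nabla\varphi(y_1)=0$, and it cites the strong-convexity divergence bound directly instead of going through the conjugate $\varphi^*$.
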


\begin{proof}
	First, for any fixed round $t\in[T]$, let $(x_t, V, p)$ be the output of $\pad(y_t, \epsilon, W, \varphi)$ in this round.
	We know by the second property of the PAD oracle that there exists $c \in W^\circ$ such that $\norm{\sum_i p_iv_i +c - x_t} \le \epsilon$.
	Since $\tilde x_t$ is equal to $v_i$ with probability $p_i$, letting $\overline x_t := \E[\tilde{x}_t]=\sum_ip_iv_i$, we have
	\begin{equation} \label{eqn:omd-inproof-1}
	f_t(\overline x_t) -f_t(x_t) = \E\left[ f_t(\tilde x_t) - f_t(x_t) \right]  =  f_t\left(\sum_i p_iv_i - x_t\right) \le 
	f_t\left(\sum_i p_iv_i - x_t + c\right) \le \epsilon L.
	\end{equation}

	 We make use of the following properties of Bregman divergence, which can be verified easily (see e.g. Section 11.2 in \citep{cesa2006prediction}):
	 \begin{equation} \label{eqn:three-point-bregman}
	 \forall x, y, z:\quad (x-y)^\top (\nabla \varphi(z) - \nabla \varphi(y) ) = D_\varphi( x,y) - D_\varphi( x,z) + D_\varphi(y,z).
	 \end{equation}
	

	Consider any $x^* \in \K^*$. We have
	\begin{equation}\label{eqn:omd-inproof-1.5}
	\begin{aligned}
		&\sum_{t=1}^T (f_t(x_t) - f_t(x^*))  \\
		 =\ & \sum_{t=1}^T  \frac{1}{\eta} \left( \nabla \varphi( x_{t} )  - \nabla \varphi( y_{t+1}) \right)^\top (x_t - x^*) & \mbox{(by algorithm definition)}  \\
		 =\ & \frac{1}{\eta} \sum_{t=1}^T  \left( D_\varphi ( x^* , x_t) - D_\varphi(x^* , y_{t+1} ) + D_\varphi(x_t , y_{t+1} ) \right) & \mbox{(by \eqref{eqn:three-point-bregman})}   \\
		 \le\ &  \frac{1}{\eta}\sum_{t=1}^T  \left( D_\varphi ( x^* , y_t) - D_\varphi(x^* , y_{t+1} ) + D_\varphi(x_t , y_{t+1} ) \right)  & \mbox{(by property of the PAD oracle)}  \\
		=\ &   \frac{1}{ \eta} \left( D_\varphi ( x^* , y_1) - D_\varphi ( x^* , y_{T+1}) +  \sum_{t=1}^TD_\varphi(x_t, y_{t+1}) ) \right).  & \mbox{(by telescoping)} 
		\end{aligned}
	\end{equation}
	Combining \eqref{eqn:omd-inproof-1} and \eqref{eqn:omd-inproof-1.5}, we can bound the expected improper regret of Algorithm~\ref{alg:OMD} as
	\begin{equation} \label{eqn:omd-inproof-2}
	\begin{aligned}
		\forall x^*\in\K^*:\quad
        &\E\left[\sum_{t=1}^T ( f_t(\tilde x_t)-f_t(x^*) ) \right] = \sum_{t=1}^T ( f_t(\overline x_t)-f_t(x^*) )\\
        \le\ & \frac{1}{ \eta} \left( D_\varphi ( x^* , y_1) - D_\varphi ( x^* , y_{T+1}) +  \sum_{t=1}^TD_\varphi(x_t, y_{t+1}) ) \right) + \epsilon LT.
        \end{aligned}
	\end{equation}

	By the optimality condition $\nabla\varphi(y_1)^\top(x^*-y_1)\ge0$, we have 
	\begin{align}\label{eqn:omd-inproof-1.75}
	D_\varphi ( x^* , y_1) \le \varphi(x^*)-\varphi(y_1).
	\end{align}
	Plugging \eqref{eqn:omd-inproof-1.75} into \eqref{eqn:omd-inproof-2} and noting $D_\varphi(x^*, y_{T+1})\ge0$, we finish the proof of the first regret bound.
	

	When $\varphi$ is $\mu$-strongly convex, we have the following well-known property:\footnote{See \url{http://xingyuzhou.org/blog/notes/strong-convexity} for a proof.}
	\begin{align*}
	D_\varphi(x, y) \le \frac{1}{2\mu} \norm{\nabla \varphi(x) - \nabla \varphi(y)}^2.
	\end{align*}
	Then by the definition in Algorithm~\ref{alg:OMD} we have
	\begin{equation} \label{eqn:omd-inproof-3}
	\forall t\in [T]:\quad D_\varphi(x_t, y_{t+1}) \le \frac{1}{2\mu} \norm{\nabla \varphi(x_t) - \nabla \varphi(y_{t+1})}^2 = \frac{1}{2\mu} \norm{\eta f_t}^2 \le \frac{\eta^2L^2}{2\mu}.
	\end{equation}
	From the above inequality and the choices of parameters $\epsilon = \frac RT$ and $\eta = \frac1L \sqrt{\frac{2\mu A}{T}}$, we have
	 \begin{equation*} 
	 \E\left[\sum_{t=1}^T ( f_t(\tilde x_t)-f_t(x^*) ) \right] \le \frac{A}{\eta} + \frac{\eta L^2 T}{2\mu} +  LR
	 \le L \sqrt{\frac{2AT}{\mu}} + LR . \qedhere
	 \end{equation*}
\end{proof}

For the problem of $\alpha$-regret minimization using an $\alpha$-approximation oracle, we have the following regret guarantee, which is an immediate corollary of Theorem~\ref{thm:OMD_regret}.
\begin{corollary}
If $W\subseteq \reals^d_+$, $\K\subseteq B(0,R)$, $\K^*=\alpha\K$, $\varphi(x)= \frac{1}{2}\norm{x}^2$, setting $\epsilon = \frac{\alpha R}{T}$, $\eta=\frac{\alpha R}{L\sqrt{T}}$, Algorithm~\ref{alg:OMD} has the following regret guarantee:
\[
\forall x^* \in \K^*: \quad  \E\left[\sum_{t=1}^T  f_t(\tilde x_t) - \alpha \sum_{t=1}^T f_t(x^*) \right] \le \alpha LR(\sqrt{T}+1).
\]
\end{corollary}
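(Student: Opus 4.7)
My plan is to derive the corollary as a direct specialization of Theorem~\ref{thm:OMD_regret}; essentially no new analysis is required, only a bit of parameter bookkeeping. The first task is to verify the PNIP hypothesis for the pair $(\varphi, W)$ with $\varphi(x) = \tfrac12\|x\|^2$. Its Hessian is constantly $I$, so the convex hull $\ch(\mathcal H)$ consists of the single matrix $I$, and the PNIP condition $w^\top H^{-1} w' \ge 0$ collapses to $w^\top w' \ge 0$ for all $w, w' \in W$. Since $W \subseteq \reals_+^d$ the inner product of any two such elements is coordinatewise non-negative, so PNIP holds; equivalently, this is the first bullet after Definition~\ref{def:pnip}, namely $W \subseteq W^\circ$.

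Next I would identify the constants needed to invoke the second half of Theorem~\ref{thm:OMD_regret}: the strong-convexity constant $\mu = 1$ for the Euclidean regularizer; the minimizer $y_1 = \argmin_{y\in\reals^d} \tfrac12\|y\|^2 = 0$ with $\varphi(y_1) = 0$; and a bound on $\max_{x^*\in\K^*}(\varphi(x^*)-\varphi(y_1))$. Since $\K^* = \alpha\K \subseteq B(0,\alpha R)$, every $x^* \in \K^*$ satisfies $\tfrac12\|x^*\|^2 \le \tfrac12\alpha^2 R^2$, so I take $A = \tfrac{\alpha^2 R^2}{2}$.

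The one piece of bookkeeping to get right is that Theorem~\ref{thm:OMD_regret} is stated under Section~3's ambient assumption $\K, \K^* \subseteq B(0, R)$. In the corollary only $\K \subseteq B(0,R)$ is imposed, but $\K^* \subseteq B(0, \alpha R)$, so the correct ambient radius to substitute into the theorem is $\alpha R$ rather than $R$ (and $\alpha \ge 1$ is the usual regime for approximation ratios, so $LR \le \alpha LR$ in any case). With this substitution, the theorem's prescribed choices become $\epsilon = \alpha R / T$ and $\eta = \tfrac{1}{L}\sqrt{2\mu A / T} = \alpha R / (L\sqrt T)$, exactly matching the corollary, and the theorem's conclusion specializes to
\[
L\sqrt{\tfrac{2AT}{\mu}} + L \cdot \alpha R \eq \alpha L R \sqrt T + \alpha LR \eq \alpha LR(\sqrt T + 1),
\]
which is the desired bound.

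I do not anticipate any genuine obstacle; each step is mechanical. The only subtlety is interpreting the parameter ``$R$'' consistently between Section~3's ambient statement and the corollary's hypothesis — i.e., recognizing that the radius to plug into Theorem~\ref{thm:OMD_regret} is $\alpha R$, not $R$ — after which the claim drops out by direct substitution.
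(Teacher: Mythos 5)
Your proposal is correct and matches the paper's intent: the paper states this as an immediate corollary of Theorem~\ref{thm:OMD_regret}, and your specialization (PNIP via $W\subseteq W^\circ$ for the Euclidean regularizer, $\mu=1$, $y_1=0$, $A=\alpha^2R^2/2$, and ambient radius $\alpha R$ since $\K^*=\alpha\K\subseteq B(0,\alpha R)$) is exactly the bookkeeping needed to recover $\epsilon=\alpha R/T$, $\eta=\alpha R/(L\sqrt T)$, and the bound $\alpha LR(\sqrt T+1)$.
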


\subsection{Construction of the Projection-and-Decomposition Oracle} \label{subsec:pad}

Now we show how to construct the PAD oracle using the improper linear optimization oracle $\O_{\K, \K^*}$. Our construction is given in Algorithm~\ref{alg:infeasible_projection}.


\begin{algorithm}[htbp]
\caption{Projection-and-Decomposition Oracle, $\pad(y,\epsilon,W,\varphi) $    }\label{alg:infeasible_projection}
\begin{algorithmic}[1]
	\REQUIRE Point $y\in \reals^d$, tolerance $\epsilon>0$, convex cone $W$, regularizer $\varphi$, 
	\ENSURE $(y',V,p) $, where $y' \in \reals^d$, $V = (v_1, \ldots, v_k) \in \reals^{d\times k}$ for some $k$ such that $v_i \in \K$ ($\forall i \in [k]$), and $p=(p_1, \ldots, p_k)^\top \in\Delta^{k-1}$
    \STATE $W_1 \gets W\cap B(0,1)$, $z_1 \gets y$
    \STATE $i\gets0$
    \WHILE{$i < 5d \log \frac{2(R+\norm{z_{i+1}})}{\epsilon}$}
    	\STATE $i\gets i+1$
		\STATE $w_i \gets \frac{\int_{ W_i}w \dw}{\vol{W_i}}$. 
		\STATE $v_i \gets \O_{\K,\K^*}(w_i)$.
        \STATE $z_{i+1} \gets \argmin\limits_{z \in \reals^d, w_i^\top(z-v_i)\ge0} D_\varphi(z,z_i)$ 
		\STATE $W_{i+1} \gets W_i \cap \{w\in \reals^d: w^\top (v_i-z_{i+1}) \ge 0\}$
	\ENDWHILE
    \STATE $k \gets i$
    \STATE Solve $\min\limits_{p\in \Delta^{k-1}, c\in W^\circ} \norm{\sum_{i=1}^k p_iv_i+c-z_{k+1}}$ to get $p$
    \RETURN $y' = z_{k+1}, V = (v_1, \ldots, v_k), p$
\end{algorithmic}
\end{algorithm}

\begin{theorem} \label{thm:projection}
	Suppose $(\varphi, W)$ satisfies the PNIP condition (Definition~\ref{def:pnip}) and $\varphi$ is $\mu$-strongly convex.
	Then for any $y\in \reals^d$ and $\epsilon \in (0, R]$, Algorithm~\ref{alg:infeasible_projection} must terminate in $k \le \left\lceil 5d \log  \frac{4R+2\sqrt{\frac{2}{\mu}\min_{x^* \in \K^*} D_\varphi(x^*, y)}}{\epsilon} \right\rceil$ iterations, and it correctly implements the projection-and-decomposition oracle $\pad(y, \epsilon, W, \varphi)$, i.e., its output $(y',V,p)$ satisfies the two properties in Definition \ref{def:projection_oracle}.
\end{theorem}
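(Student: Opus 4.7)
The plan is to establish the termination bound and the two conditions of Definition~\ref{def:projection_oracle} in turn, with the second condition being the main work.

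\textbf{Step~1 (Property~1 and termination bound).} The first property of Definition~\ref{def:projection_oracle} follows directly from the Generalized Pythagorean Theorem (Lemma~\ref{lem:generalized-pythagorean}). The oracle guarantee gives $w_i^\top v_i \leq w_i^\top x^*$ for every $x^* \in \K^*$, so $\K^* \subseteq \{z : w_i^\top(z - v_i) \geq 0\}$, and $z_{i+1}$ is the Bregman projection of $z_i$ onto this convex set; hence $D_\varphi(x^*, z_{i+1}) \leq D_\varphi(x^*, z_i)$ for every $x^* \in \K^*$, and telescoping yields Property~1. Combining Property~1 with $\mu$-strong convexity gives $\|z_{i+1} - x^*\|^2 \leq (2/\mu)\, D_\varphi(x^*, y)$ for every $i$ and $x^*$, hence $\|z_{i+1}\| \leq R + \sqrt{(2/\mu)\min_{x^*\in\K^*} D_\varphi(x^*, y)}$. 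Substituting this into the while-loop condition yields the stated upper bound on $k$.

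\textbf{Step~2 (Property~2, reduction and the easy case).} By convex duality (Moreau's formula for the distance to a closed convex set), the minimum of $\|\sum_i p_i v_i + c - z_{k+1}\|$ over $(p, c) \in \Delta^{k-1}\times W^\circ$ equals the Euclidean distance from $z_{k+1}$ to $\conv(v_1,\ldots,v_k) + W^\circ$, which collapses to
\[\sup_{w\in W\cap B(0,1)}\bigl[\min_i w^\top v_i - w^\top z_{k+1}\bigr].\]
It therefore suffices to show that for every $w\in W\cap B(0,1)$ there is an index $i$ with $w^\top v_i \leq w^\top z_{k+1} + \epsilon$. A useful monotonicity lemma I will first establish is $w^\top z_{j+1} \leq w^\top z_{k+1}$ for all $j \leq k$ and $w \in W$: by the KKT conditions of each Bregman projection, $\nabla\varphi(z_{k+1}) - \nabla\varphi(z_{j+1}) = \sum_{i>j}\lambda_i w_i$ with $\lambda_i \geq 0$; the mean-value identity $z_{k+1} - z_{j+1} = \int_0^1 (\nabla^2\varphi(z(t)))^{-1}\bigl(\sum_{i>j} \lambda_i w_i\bigr)\,dt$ combined with the PNIP property (applied pointwise, since each $\nabla^2\varphi(z(t)) \in \ch(\mathcal{H})$) then gives the claim. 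This lemma immediately handles the easy case $w \notin W_{k+1}$: some cut is violated, so $w^\top v_j < w^\top z_{j+1} \leq w^\top z_{k+1}$ and the inner difference is strictly negative.

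\textbf{Step~3 (the main obstacle: $w \in W_{k+1}$).} I argue by contradiction. Suppose some $w \in W_{k+1}$ has $\min_i w^\top v_i - w^\top z_{k+1} > \epsilon$; combined with monotonicity this gives $w^\top(v_i - z_{i+1}) > \epsilon$ for every $i$. A Lipschitz perturbation then promotes this single direction into a neighborhood: for $\delta = \Theta(\epsilon/R')$, where $R'$ is the uniform bound on $\|v_i - z_{i+1}\|$ coming from Step~1, every $w'$ in a $\delta$-ball around $w$ still satisfies every cut inequality and therefore lies in $W_{k+1}$, giving $\vol{W_{k+1}} \geq c\delta^d$ for an absolute constant $c$. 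On the other hand, each $w_i$ is the centroid of $W_i$ and the discarded halfspace always contains $w_i$ (the cut passes through $w_i$ when the projection is nontrivial, and strictly separates $w_i$ from the kept side when it is trivial), so Gr\"unbaum's theorem gives $\vol{W_{i+1}} \leq (1 - 1/e)\vol{W_i}$ at every iteration; iterating yields $\vol{W_{k+1}} \leq (1-1/e)^k\vol{B(0,1)}$. For $k \geq 5d\log(2(R+\|z_{k+1}\|)/\epsilon)$, the chosen loop-termination threshold, this upper bound is strictly smaller than $c\delta^d$, the desired contradiction. I expect the main difficulty to be making this volume argument fully airtight, in particular handling the trivial-projection case cleanly (so that the centroid really does lie on the discarded side) and dealing with boundary effects when $w$ lies close to $\partial W$ or to the unit sphere, which I anticipate handling by a small rescaling of $w$ toward the interior that preserves the $\epsilon$ gap up to constants.
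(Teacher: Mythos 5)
Your Steps 1 and 2 are essentially correct and track the paper's own route: Property 1 and the iteration bound are exactly Lemma~\ref{lem:pad-running-time} (generalized Pythagorean theorem plus $\mu$-strong convexity), and your monotonicity claim $w^\top z_{j+1}\le w^\top z_{k+1}$ is the telescoped form of Lemma~\ref{lem:pad-move-direction} — though the mean-value identity should be written as $\nabla\varphi(z_{k+1})-\nabla\varphi(z_{j+1})=H(z_{k+1}-z_{j+1})$ with $H=\int_0^1\nabla^2\varphi\,\mathrm{d}t\in\ch(\mathcal{H})$ and inverted afterwards (one cannot move the inverse inside the integral along the straight segment; this is precisely why PNIP is stated for $\ch(\mathcal{H})$). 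Your reduction of Property 2 via Moreau/minimax duality to ``for every unit $w\in W$ some $i$ has $w^\top(v_i-z_{k+1})\le\epsilon$'' is a legitimate and somewhat slicker substitute for the paper's Lemma~\ref{lem:equivalence}, which proves the same equivalence by a projection argument.

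The genuine gap is in Step 3. From ``every $w'$ in a $\delta$-ball around $w$ satisfies every cut inequality'' you conclude $w'\in W_{k+1}$ and hence $\vol{W_{k+1}}\ge c\delta^d$; but membership in $W_{k+1}$ also requires $w'\in W_1=W\cap B(0,1)$, and a Euclidean ball around $w$ need not lie in the cone. For a thin cone (say a circular cone of half-angle $\theta\ll\delta$) one has $\vol{B(w,\delta)\cap W_1}\le\vol{W_1}=O(\theta^{d-1})\ll\delta^d$, so the absolute lower bound $c\delta^d$ is simply false, and the comparison against $(1-1/e)^k\vol{B(0,1)}$ collapses. Your anticipated repair — rescaling $w$ toward ``the interior'' — does not help: scaling $w$ moves it toward the apex, not away from $\partial W$, and a thin cone may contain no point of $W_1$ at distance $\delta$ from its boundary at all. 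The device that works (used in the paper's Lemma~\ref{lem:cutting_plane_OMD}) is to replace the ball by a shifted, scaled copy of $W_1$ itself, e.g.\ $\frac{w}{2}+(W\cap B(0,r))$ with $r=\frac{\epsilon}{2(R+\norm{z_{k+1}})}$: this set stays inside $W_1$ because $W$ is a convex cone, every point of it retains a strictly positive gap against all cuts (here the perturbation must be calibrated to $\norm{v_i-z_{k+1}}\le R+\norm{z_{k+1}}$, which is what ties the constants to the loop's termination threshold), and its volume is \emph{exactly} $r^d\vol{W_1}$ by the cone's scaling property, so the contradiction with $\vol{W_{k+1}}\le(1-1/(2e))^k\vol{W_1}$ follows from $k\ge 5d\log(1/r)$ for every cone, however degenerate its shape. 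With that substitution your Grünbaum/centroid step (the removed half-space always contains the centroid $w_i$, either on its boundary or in its interior) and the rest of the argument go through.
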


We break the proof of Theorem~\ref{thm:projection} into several lemmas.

\begin{lemma} \label{lem:pad-move-direction}
If $(\varphi, W)$ satisfies the PNIP condition (Definition~\ref{def:pnip}), 
then $z_1, \ldots, z_{k+1}$ computed in Algorithm~\ref{alg:infeasible_projection} satisfy
 $z_{i+1}-z_i\in W^\circ$ for all $i\in[k]$.
\end{lemma}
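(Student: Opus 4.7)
The plan is to read off the KKT optimality conditions of the Bregman projection defining $z_{i+1}$, rewrite the resulting gradient difference via the fundamental theorem of calculus as a positive multiple of an averaged Hessian applied to $z_{i+1}-z_i$, and then invoke the PNIP property of $(\varphi,W)$ to conclude.

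More concretely, first I would observe that $z_{i+1}$ is the Bregman projection of $z_i$ onto the affine half-space $H_i:=\{z:w_i^\top(z-v_i)\ge 0\}$. If $z_i\in H_i$ the KKT multiplier is zero and $z_{i+1}=z_i$, so the claim is trivial; otherwise the constraint is active and, since the constraint is linear (so no constraint qualification is needed), the first-order condition for minimizing $D_\varphi(\,\cdot\,,z_i)$ subject to $w_i^\top(z-v_i)\ge 0$ reads
\[
\nabla\varphi(z_{i+1})-\nabla\varphi(z_i)\eq \lambda_i w_i,\qquad \lambda_i\ge 0.
\]
Now, because $\varphi$ is Legendre and its gradient is finite at $z_i$ and $z_{i+1}$, both points lie in $\interior(\A)$, and convexity of $\interior(\A)$ ensures the whole segment between them does as well. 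Applying the fundamental theorem of calculus to $\nabla\varphi$ along this segment gives
\[
\nabla\varphi(z_{i+1})-\nabla\varphi(z_i)\eq \bar H_i\,(z_{i+1}-z_i),\qquad \bar H_i:=\int_0^1 \nabla^2\varphi\bigl(z_i+t(z_{i+1}-z_i)\bigr)\,dt.
\]
The matrix $\bar H_i$ is an average of Hessians of $\varphi$ at interior points of $\A$, hence $\bar H_i\in \ch(\mathcal H)$; strong convexity makes it invertible, so $z_{i+1}-z_i=\lambda_i\,\bar H_i^{-1} w_i$.

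Finally, I would verify that $w_i\in W$: by construction $w_i$ is the centroid of $W_i\subseteq W\cap B(0,1)$, and $W$ being a convex cone (hence convex) forces $w_i\in W$. For an arbitrary $w'\in W$, the PNIP property applied to $\bar H_i\in\ch(\mathcal H)$ and the pair $w_i,w'\in W$ gives $w'^\top\bar H_i^{-1}w_i\ge 0$, so
\[
w'^\top(z_{i+1}-z_i)\eq \lambda_i\,w'^\top\bar H_i^{-1}w_i \geq 0,
\]
which is exactly the statement $z_{i+1}-z_i\in W^\circ$. The only delicate point I anticipate is justifying that the whole segment $[z_i,z_{i+1}]$ sits in $\interior(\A)$ so the Hessian integral is well defined; this is the only place the Legendre hypothesis is used, and it follows because $\nabla\varphi$ is finite at the two endpoints and $\interior(\A)$ is convex. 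Everything else is a routine KKT-plus-linear-algebra computation, with the PNIP assumption doing the real work at the end.
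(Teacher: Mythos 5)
Your proposal is correct and follows essentially the same route as the paper's proof: read off the KKT condition $\nabla\varphi(z_{i+1})-\nabla\varphi(z_i)=\lambda w_i$ with $\lambda\ge 0$, express this gradient difference as an averaged Hessian $H\in\ch(\mathcal H)$ applied to $z_{i+1}-z_i$, and conclude $w^\top(z_{i+1}-z_i)=\lambda\, w^\top H^{-1}w_i\ge 0$ for all $w\in W$ by the PNIP property. The extra care you take (segment in $\interior(\A)$, $w_i\in W$ as a centroid, invertibility of the averaged Hessian) only makes explicit details the paper leaves implicit.
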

\begin{proof}
Since we have $z_{i+1} = \argmin\limits_{z\in\reals^d:w_i^\top (z-v_i)\ge0} D_\varphi(z,z_i)$, by the KKT condition, we have $$0 = \frac{\partial}{\partial z} \left( D_\varphi(z,z_i) - \lambda w_i^\top (z-v_i) \right) \Big|_{z=z_{i+1}} = \nabla \varphi(z_{i+1})-\nabla \varphi(z_i) - \lambda w_i $$ for some $\lambda\ge0$. On the other hand, note that $\nabla \varphi(z_{i+1})-\nabla \varphi(z_i)=\int_{0}^1\nabla^2\varphi(\gamma z_{i+1}+(1-\gamma)z_i)\cdot(z_{i+1}-z_i)\mathrm{d}\gamma=H(z_{i+1}-z_i)$, for some $H\in\ch(\mathcal{H})$, where $\mathcal{H} = \left\{ \nabla^2\varphi(x): x \in \mathrm{Dom}(\varphi) \right\}$. Therefore, for all $ w\in W$ we have $w^\top (z_{i+1}-z_i)=w^\top H^{-1}H(z_{i+1}-z_i)= \lambda w^\top H^{-1}w_i\ge 0$.
This means $z_{i+1}-z_i\in W^\circ$.
\end{proof}


\begin{lemma}\label{lem:pad-running-time}
	Under the setting of Theorem~\ref{thm:projection}, Algorithm~\ref{alg:infeasible_projection} terminates in at most $$\left\lceil 5d \log  \frac{4R+2\sqrt{\frac{2}{\mu}\min_{x^* \in \K^*} D_\varphi(x^*, y)}}{\epsilon} \right\rceil$$ iterations.
\end{lemma}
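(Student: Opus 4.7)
The plan is to bound $\norm{z_{i+1}}$ uniformly in $i$, which caps the growth of the while-loop threshold and immediately yields the claimed iteration count. Notably, no volume-reduction argument on $W_i$ is needed here; volume reduction only enters the correctness of the decomposition, not the running time.

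First I would establish a Bregman contraction: for every $x^*\in\K^*$ and every iterate index $i$,
\[
D_\varphi(x^*, z_{i+1}) \le D_\varphi(x^*, z_i).
\]
This follows from the generalized Pythagorean theorem (Lemma~\ref{lem:generalized-pythagorean}) applied to the halfspace $H_i:=\{z:w_i^\top(z-v_i)\ge 0\}$, since $z_{i+1}$ is by definition the Bregman projection of $z_i$ onto $H_i$. The premise that $x^*\in H_i$ is exactly what the improper oracle supplies: $v_i=\O_{\K,\K^*}(w_i)$ satisfies $w_i^\top v_i\le\min_{x'\in\K^*}w_i^\top x'$, hence $w_i^\top(x^*-v_i)\ge 0$ for every $x^*\in\K^*$. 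Iterating from $z_1=y$ then gives $D_\varphi(x^*, z_{i+1})\le D_\varphi(x^*, y)$ for every $x^*\in\K^*$ and every $i\ge 0$.

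Next I would convert this into a Euclidean bound via $\mu$-strong convexity: since $\tfrac{\mu}{2}\norm{x^*-z_{i+1}}^2\le D_\varphi(x^*, z_{i+1})$, combining with $\norm{x^*}\le R$ (because $\K^*\subseteq B(0,R)$) and the triangle inequality yields, for every $x^*\in\K^*$,
\[
\norm{z_{i+1}} \;\le\; R + \sqrt{\tfrac{2}{\mu}\,D_\varphi(x^*, y)}.
\]
Minimizing the right-hand side over $x^*\in\K^*$ produces the uniform bound $\norm{z_{i+1}}\le R+\sqrt{(2/\mu)\min_{x^*\in\K^*}D_\varphi(x^*, y)}$.

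Finally, plugging this uniform bound into the while-loop guard $i<5d\log\frac{2(R+\norm{z_{i+1}})}{\epsilon}$ shows the threshold never exceeds $5d\log\frac{4R+2\sqrt{(2/\mu)\min_{x^*\in\K^*}D_\varphi(x^*, y)}}{\epsilon}$, so the loop must exit once $i$ reaches the ceiling of this quantity, which proves the claim. The only point meriting care is verifying that the projection halfspace $H_i$ contains all of $\K^*$, and this is precisely the oracle's defining property. Note that the PNIP property, although assumed in the statement, plays no role in this argument---it was used for Lemma~\ref{lem:pad-move-direction} and will enter elsewhere in the correctness proof (Theorem~\ref{thm:projection}), but not in the iteration count.
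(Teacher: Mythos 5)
Your proposal is correct and follows essentially the same route as the paper's proof: the generalized Pythagorean theorem applied to the oracle-supplied halfspace gives the Bregman non-expansion $D_\varphi(x^*,z_{i+1})\le D_\varphi(x^*,y)$, $\mu$-strong convexity converts this into the uniform bound $\norm{z_{i+1}}\le R+\sqrt{(2/\mu)\min_{x^*\in\K^*}D_\varphi(x^*,y)}$, and plugging this into the loop guard caps the iteration count. Your side remarks (that neither the PNIP property nor the volume-reduction argument is needed for the termination bound) are also consistent with the paper, which reserves those ingredients for the correctness part of Theorem~\ref{thm:projection}.
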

\begin{proof}
	According to the algorithm,
	for each $i$, $z_{i+1}$ is the Bregman projection of $z_i$ onto a half-space containing $\K^*$, since the oracle $\O_{\K, \K^*}$ ensures $w_i^\top v_i \le w_i^\top x^*$ for all $x^* \in \K^*$.
	Then by the generalized Pythagorean theorem (Lemma~\ref{lem:generalized-pythagorean}) we know $D_\varphi(x^*, z_{i+1}) \le D_\varphi(x^*, z_i)$ for all $x^* \in \K^*$ and $i$. 
	Therefore we have $ D_\varphi(x^*, z_{i}) \le D_\varphi(x^*, z_{1}) = D_\varphi(x^*, y) $
for all $x^*\in \K^*$ and $i$.
	
	Let $P := \min_{x^* \in \K^*} D_\varphi(x^*, y)$.
	Then there exists $x^* \in \K^*$ such that $P = D_\varphi(x^*, y) \ge D_\varphi(x^*, z_i) \ge \frac{\mu}{2} \norm{x^* - z_i}^2$ for all $i$, where the last inequality is due to the $\mu$-strong convexity of $\varphi$.
	 This implies $\norm{z_i} \le \norm{x^*} + \sqrt{\frac{2P}{\mu}} \le R + \sqrt{\frac{2P}{\mu}}$ for all $i$.
     Therefore, when $i\ge 5d \log\frac{4R+2\sqrt{2P/\mu}}{\epsilon}$, we must have $i\ge 5d \log\frac{2(R+\norm{z_{i+1}})}{\epsilon}$, which means the loop must have terminated at this time. This proves the lemma.
\end{proof}


\begin{lemma}\label{lem:cutting_plane_OMD}
	Under the setting of Theorem~\ref{thm:projection}, 
for all $ w\in W, \|w\|=1$, there exists $i\in[k]$ such that  $w^\top (v_i-y')\le \epsilon$.
\end{lemma}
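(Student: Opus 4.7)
The plan is to prove the contrapositive via a volumetric cutting-plane argument. Suppose, for contradiction, that there is some $w \in W$ with $\|w\|=1$ such that $w^\top(v_i - y') > \epsilon$ for every $i \in [k]$. I will exhibit a large explicit subset of $W_1$ that is forced to lie inside $W_{k+1}$, and then contradict this against the fact that $\vol{W_{k+1}}$ has shrunk by a definite factor at each of the $k$ iterations.

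The key step is to produce such a subset. Let $M := R + \|z_{k+1}\|$ and $\rho := \epsilon/(4M)$, and consider the scaled-and-translated copy $S := \tfrac{1}{2} w + \rho \cdot W_1$. Each $s \in S$ lies in $W$ (because $W$ is a convex cone closed under nonnegative sums) and satisfies $\|s\| \le \tfrac12 + \rho \le 1$ (since $\rho \le \tfrac14$, which follows from $M \ge R \ge \epsilon$), so $s \in W_1$. Using $w^\top(v_i - y') > \epsilon$ together with the Cauchy--Schwarz bound $|u^\top(v_i - y')| \le M$ for $u \in W_1$, a one-line computation gives $s^\top(v_i - y') > \epsilon/2 - \rho M = \epsilon/4 > 0$ for every $i$. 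Combining this with the telescoping identity $z_{k+1} - z_{j+1} = \sum_{l=j+1}^{k}(z_{l+1} - z_l) \in W^\circ$, which follows by summing the conclusion of Lemma~\ref{lem:pad-move-direction}, yields $s^\top(v_j - z_{j+1}) \ge s^\top(v_j - y') > 0$ for all $j \in [k]$. Hence $s$ satisfies every defining half-space of $W_{k+1}$, proving $S \subseteq W_{k+1}$.

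On the other hand, at each iteration the cut excludes (or just grazes) the centroid $w_i$ of $W_i$: the KKT condition for $z_{i+1}$ used in the proof of Lemma~\ref{lem:pad-move-direction} gives $\nabla\varphi(z_{i+1}) - \nabla\varphi(z_i) = \lambda w_i$ for some $\lambda \ge 0$, which in turn implies $w_i^\top(v_i - z_{i+1}) \le 0$. Thus the new half-space $\{w : w^\top(v_i - z_{i+1}) \ge 0\}$ has $w_i$ on its closed complement, and Gr\"unbaum's inequality (equivalently, Lemma~\ref{lem:center_cut} applied to the uniform log-concave distribution on $W_i$ after an isotropizing affine transformation) yields $\vol{W_{i+1}} \le (1 - 1/e)\,\vol{W_i}$, so iterating gives $\vol{W_{k+1}} \le (1 - 1/e)^{k}\vol{W_1}$.

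To close the argument, observe that $\vol{S} = \rho^{d}\vol{W_1}$ by translation invariance and scaling of Lebesgue measure; the case in which $W$ is not full-dimensional is handled identically after restricting to $\mathrm{span}(W)$ and replacing $d$ by $\dim\mathrm{span}(W) \le d$. Combining the two volume bounds forces $\rho^d \le (1 - 1/e)^k$, i.e., $k \le d\log(1/\rho)/\log\bigl(e/(e-1)\bigr) \approx 2.2\, d\log(4M/\epsilon)$, which is strictly smaller than the termination threshold $5d \log(2M/\epsilon)$ whenever $\epsilon \le R \le M$, producing the desired contradiction. The main sticking point I anticipate is keeping track of constants in the termination criterion and rigorously handling the lower-dimensional case; the conceptual core is the inclusion $S \subseteq W_{k+1}$ extracted from the telescoping $W^\circ$-monotonicity of $\{z_i\}$.
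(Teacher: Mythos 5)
Your proof is correct and follows essentially the same route as the paper's: assume a violating unit direction $w\in W$, translate the small cone cap $\frac{1}{2}w + (W\cap B(0,\rho))$ into $W_1$, use the telescoped $W^\circ$-monotonicity from Lemma~\ref{lem:pad-move-direction} to force this cap inside $W_{k+1}$, and contradict the per-iteration centroid-cut volume decrease against the loop's termination threshold $5d\log\frac{2(R+\norm{z_{k+1}})}{\epsilon}$. The only substantive deviation is quantitative: you shrink the radius to $\epsilon/(4M)$ (versus the paper's $\epsilon/(2M)$) and compensate by invoking Gr\"unbaum's exact-centroid factor $1-1/e$ rather than the $1-1/(2e)$ of Lemma~\ref{lem:center-cut-to-use}, which is legitimate under the paper's assumption of exact mean computation but means your constants close only because of that stronger bound (with $1/(2e)$ your smaller radius would leave essentially no slack against the $5d$ threshold), and the parenthetical claim that Gr\"unbaum is ``equivalent'' to Lemma~\ref{lem:center_cut} is inaccurate since that lemma only yields mass $1/(2e)$.
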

\begin{proof}
	We assume for contradiction that
	 there exists a unit vector $h\in W$ such that $\min_{i\in [k]} h^\top(v_i-y')> \epsilon$. Note that $\|v_i-y'\|\le \|v_i\|+\|y'\|\le R+\norm{y'}$. Letting $r := \frac{\epsilon}{2(R + \norm{y'})}$, we have 
	\[\forall w\in \frac{h}{2}+ (W\cap B(0,r)):\quad \min_{i\in [k]} w^\top(v_i-y')> 0.\]
	Since $r\le \frac{1}{2}$ for $\epsilon\le R$, we have $\frac{h}{2}+ (W\cap B(0,r)) \subseteq \frac{h}{2}+ (W\cap B(0,1/2)) \subseteq W\cap B(0,1)=W_1$. 
	
	By the algorithm, we know that
	for all $ w\in W_1\setminus W_{k+1}$, there exists $i\in[k]$ such that $w^\top(v_i-z_{i+1})\le 0$. Notice that from Lemma~\ref{lem:pad-move-direction} we know $z_{j+1}-z_{j}\in W^\circ$ for all $j\in[k]$. Thus  for all $ w\in W_1\setminus W_{k+1}$ there exists $i\in[k]$ such that
	$w^\top (v_i-y') = w^\top (v_i-z_{k+1}) \le w^T(v_i-z_{i+1})\le 0$.
	In other words, we have
	\[\forall w\in W_1\setminus W_{k+1}:\quad \min_{i\in [k]} w^\top(v_i-y')\le 0.\]
	
	Therefore, we must have $\frac{h}{2}+ (W\cap B(0,r))\subseteq W_{k+1}$. 
		We also have $\vol{W_{i+1}} \le (1-1/(2e)) \vol{W_i}$ for each $i\in[k]$ from Lemma~\ref{lem:center-cut-to-use}, since $W_{i+1}$ is the intersection of $W_i$ with a half-space that does not contain $W_i$'s centroid $w_i$ in the interior.
		Then we have
		\begin{align*}
		\vol{W_1} &=\vol{W\cap B(0,1)}= r^{-d}\vol{W\cap B(0,r)}\le r^{-d}\vol{W_{k+1}}\\
		&\le r^{-d}(1-1/(2e))^{k}\vol{W_1}< \vol{W_1},
		\end{align*}
		where the last step is due to $k \ge 5d \log\frac1r = 5d \log\frac{2(R+\norm{y'})}{\epsilon} = 5d \log\frac{2(R+\norm{z_{k+1}})}{\epsilon}$, which is true according to the termination condition of the loop. Therefore we have a contradiction.
\end{proof}

We need the following basic property of projection onto a convex cone. The proof is given in Appendix~\ref{app:proof-omd}.
\begin{lemma}\label{lem:diff_in_dual}
	For any closed convex cone $W\subseteq \reals^d$ and any $x\in\reals^d$, we have
	$ \Pi_W(x)-x\in W^\circ$.
\end{lemma}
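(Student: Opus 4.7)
The plan is to derive Lemma~\ref{lem:diff_in_dual} directly from the Pythagorean theorem (Lemma~\ref{lem:pythagorean}), which is already available. I need to show that $(\Pi_W(x)-x)^\top w \ge 0$ for every $w \in W$, and the cone structure gives me the freedom to test the Pythagorean inequality against a cleverly chosen family of points in $W$.

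The key step will be to fix an arbitrary $w \in W$ and, for each $t>0$, set $y_t := \Pi_W(x) + t w$. Because $W$ is a convex cone, it is closed under non-negative scalar multiplication and under addition of its elements, so $tw \in W$ and $\Pi_W(x) + tw \in W$, making $y_t$ a valid test point. Applying Lemma~\ref{lem:pythagorean} with this choice yields
\begin{equation*}
\bigl(\Pi_W(x) - x\bigr)^\top \bigl(\Pi_W(x) - y_t\bigr) \;=\; -t\bigl(\Pi_W(x) - x\bigr)^\top w \;\le\; 0.
\end{equation*}
Dividing by $t>0$ gives $(\Pi_W(x)-x)^\top w \ge 0$. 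Since $w \in W$ was arbitrary, this is exactly the definition of $\Pi_W(x) - x \in W^\circ$.

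There really is no serious obstacle here: the only thing to verify is that $\Pi_W(x) + tw$ lies in $W$, which is immediate from the convex cone axioms (closure under non-negative scaling plus convexity, or equivalently closure under addition of cone elements and non-negative scaling). The argument also does not rely on $x \notin W$ — if $x \in W$ then $\Pi_W(x) = x$ and the conclusion $0 \in W^\circ$ is trivial. Thus the full write-up is a two-line deduction from Lemma~\ref{lem:pythagorean}, and it can be placed in the appendix exactly as promised in the text.
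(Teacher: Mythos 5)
Your argument is correct and is essentially identical to the paper's own proof: both test the Pythagorean inequality (Lemma~\ref{lem:pythagorean}) at the point $\Pi_W(x)+w$ (your extra scaling by $t$ and dividing it out is an immaterial variation), using closure of the convex cone under addition to justify that the test point lies in $W$. No gaps.
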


The following lemma is a more general version of Lemma 6 in \citep{garber2017efficient}.
\begin{lemma}\label{lem:equivalence} 
	Given 
	$v_1, \ldots, v_k \in \reals^d$, $\epsilon\ge0$ and a convex cone $W \in \reals^d$, for any $x\in \reals^d$, the following two statements are equivalent:
\begin{enumerate}[(A)]
    \item There exists $p = (p_1, \ldots, p_k)^\top \in \Delta^{k-1}$ and $c\in W^\circ$ such that $\norm{\sum_{i=1}^k p_iv_i+c-x}\le \epsilon$.
    \item For all $ w\in W$, $\|w\|=1$, there exists $i\in[k]$ such that $w^\top(v_i-x)\le \epsilon$.
\end{enumerate}
\end{lemma}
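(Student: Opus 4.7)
The plan is to show the two implications separately. The forward direction $(A) \Rightarrow (B)$ is an averaging argument: given $p$ and $c$ as in $(A)$, for any unit $w \in W$, Cauchy--Schwarz gives $w^\top(\sum_i p_i v_i + c - x) \le \epsilon$, and since $w^\top c \ge 0$ by the definition of $W^\circ$, rearranging yields $\sum_i p_i \cdot w^\top(v_i - x) \le \epsilon$. Hence the minimum over $i$ is at most $\epsilon$, producing the desired index.

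For $(B) \Rightarrow (A)$, I will argue by contrapositive using the closed convex set $K := \conv\{v_1,\ldots,v_k\} + W^\circ$, which is closed because $\conv\{v_i\}$ is compact and $W^\circ$ is closed. The key reformulation is that $(A)$ is equivalent to the Euclidean distance inequality $d(x,K) \le \epsilon$. So assume $d(x,K) > \epsilon$, let $y := \Pi_K(x)$, and set $h := (y-x)/\|y-x\|$. The Pythagorean theorem (Lemma~\ref{lem:pythagorean}) yields $(y-x)^\top(u-y) \ge 0$ for all $u \in K$, which rearranges to $h^\top(u - x) \ge \|y-x\| > \epsilon$ for every $u \in K$. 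Two specializations of $u$ finish the argument: taking $u = v_j + \lambda c$ with $c \in W^\circ$ and $\lambda \to \infty$ forces $h^\top c \ge 0$ for every $c \in W^\circ$, so by the bipolar theorem $h \in (W^\circ)^\circ = \overline W$; and taking $u = v_j$ (i.e.\ $c = 0$) gives $h^\top(v_j - x) > \epsilon$ for every $j$, violating $(B)$ at the unit vector $h$.

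The step requiring the most care is the bipolar appeal together with the implicit closedness of $W$: since the paper only declares $W$ a convex cone, one should either note that the $W$ arising in all applications here is closed, or pass to $\overline W$ throughout. This replacement leaves $W^\circ$ (and hence $(A)$) unchanged, and any $h \in \overline W$ satisfying $h^\top(v_j - x) > \epsilon$ with positive slack $\|y-x\| - \epsilon$ can be approximated by a unit vector in $W$ strictly satisfying the same inequality, so the contradiction with $(B)$ persists.
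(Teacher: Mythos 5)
Your proof is correct, and the forward direction coincides with the paper's averaging argument; the interesting difference is in the direction (B)$\Rightarrow$(A). The paper argues directly: it takes the actual minimizer $(p^*,c^*)$ of $\min_{p\in\Delta^{k-1},\,c\in W^\circ}\norm{\sum_i p_iv_i+c-x}$ (which is exactly what Algorithms~\ref{alg:infeasible_projection} and~\ref{alg:SOD} compute), shows via a perturbation-of-$c$ argument and Lemma~\ref{lem:diff_in_dual} that the residual $w=\sum_i p_i^*v_i+c^*-x$ must itself lie in $W$ (otherwise replacing $c^*$ by $c^*+\Pi_W(w)-w$ strictly decreases the norm), and then applies the Pythagorean inequality (Lemma~\ref{lem:pythagorean}) for the set $\ch(\{v_i\})+W^\circ-x$ together with (B) applied to $w/\norm{w}$ to conclude $\norm{w}\le\epsilon$. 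You instead argue by contrapositive: project $x$ onto $K=\conv\{v_1,\dots,v_k\}+W^\circ$, show the normalized residual $h$ separates $x$ from $K$ with slack, deduce $h\in(W^\circ)^\circ=\overline W$ via the ray specialization $u=v_j+\lambda c$ (or the bipolar theorem), and then approximate $h$ by unit vectors of $W$ to contradict (B). The two arguments hinge on the same geometric object (the projection of $x$ onto $\ch(\{v_i\})+W^\circ$); what your route buys is an explicit treatment of the case where $W$ is not closed, which the paper glosses over (its use of $\Pi_W$ and Lemma~\ref{lem:diff_in_dual} implicitly assumes $W$ closed, harmless in all applications since the relevant cones there are closed), at the cost of invoking the bipolar theorem, the closedness of $\conv\{v_i\}+W^\circ$, and a final approximation step; what the paper's route buys is that it certifies the feasibility of the very pair $(p,c)$ the algorithms output, with no limiting arguments.
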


\paragraph{Geometric interpretation of Lemma~\ref{lem:equivalence}.}
Before proving Lemma~\ref{lem:equivalence}, we discuss its geometric intuition.
For simplicity of illustration, we only consider $\epsilon=0$ here.
First we look at the case where $W=\reals^d,\ W^\circ = \{0\}$. In this case the lemma simply degenerated to the fact \[x\in \ch(\{v_i\}_{i=1}^k)\Longleftrightarrow \textrm{There is no hyperplane that separates }x \textrm{ and all } v_i\text{'s}.\]
In the general case where $W\subseteq \reals^d$ is an arbitrary convex cone, lemma~\ref{lem:equivalence} becomes 
\[ x\in \ch(\{v_i\}_{i=1}^k)+ W^\circ \Longleftrightarrow \textrm{There is no direction } w\in W \textrm{ such that } w^\top x < w^\top v_i \text{ for all } i.\]
Denote $F:= \ch(\{v_i\}_{i=1}^k)+ W^\circ$.
For the ``$\Rightarrow$'' side,
if $x \in F$, it is clear that for all $w\in W$ we must have $w^\top x \ge w^\top v_i$ for some $i$.
For the ``$\Leftarrow$'' side, if $x\notin F$, then $w = \Pi_F(x)-x$ satisfies $w^\top x < w^\top v_i$ for all $i$. 
Moreover it is easy to see $\Pi_F(x)-x\in W$, which completes the proof. 
See Figure~\ref{fig:demon1} for a graphic illustration.

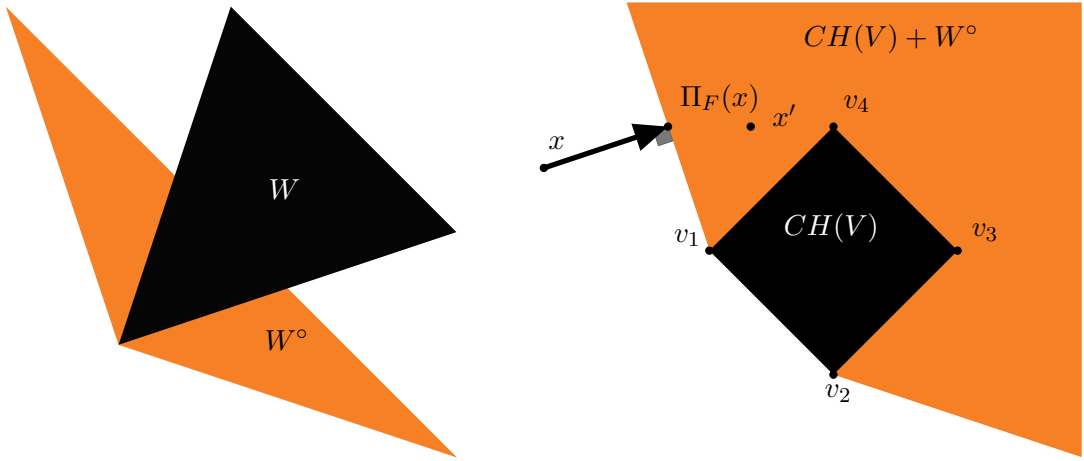
\begin{figure}[htbp] \centering
    \begin{subfigure}[b]{0.49\linewidth}
        \definecolor{ffffff}{rgb}{1,1,1}\definecolor{qvqvqv}{rgb}{0.0196078431372549,0.0196078431372549,0.0196078431372549}\definecolor{fvyqsv}{rgb}{0.9607843137254902,0.5019607843137255,0.1450980392156863}

\begin{tikzpicture}[line cap=round,line join=round,>=triangle 45,x=1cm,y=1cm,scale=0.75]
\clip(-3.5562340966921235,-2.142900763358787) rectangle (7.858320610687037,6.24);
\fill[line width=2pt,color=fvyqsv,fill=fvyqsv,fill opacity=1] (0,0) -- (-2,6) -- (6,-2) -- cycle;\fill[line width=2pt,color=qvqvqv,fill=qvqvqv,fill opacity=1] (0,0) -- (1.9892111959287533,6) -- (5.989211195928753,2) -- cycle;
\draw [color=ffffff](2.485496183206109,3.1112468193384255) node[anchor=north west] {$W$};
\draw (2.409974554707381,0.4464122137404555) node[anchor=north west] {$W^\circ$};
\end{tikzpicture}
        \label{fig:dual_cone}
        \subcaption{Convex cone $W$ and its dual cone $W^\circ$}
    \end{subfigure} %
    \begin{subfigure}[b]{0.49\linewidth}    
        \definecolor{ffffff}{rgb}{1,1,1}\definecolor{fvyqsv}{rgb}{0.9607843137254902,0.5019607843137255,0.1450980392156863}\definecolor{wrwrwr}{rgb}{0.3803921568627451,0.3803921568627451,0.3803921568627451}\begin{tikzpicture}[line cap=round,line join=round,>=triangle 45,x=1cm,y=1cm,scale = 0.55]
\clip(-7.25,-5.2) rectangle (8,6);\fill[line width=2pt,color=fvyqsv,fill=fvyqsv,fill opacity=1] (-2.990979205697039,0) -- (-5,6.003454879086257) -- (6,6.003454879086257) -- (6,-4.996545120913743) -- (0,-3) -- cycle;
\fill[line width=2pt,color=black,fill=black,fill opacity=1] (-3,0) -- (0,-3) -- (3,0) -- (0,3) -- cycle;
\draw[line width=0.4pt,color=wrwrwr,fill=wrwrwr,fill opacity=0.85] (-4.344949960274375,2.8850166799085413) -- (-4.2299666401829175,2.5400667196341655) -- (-3.8850166799085413,2.6550500397256243) -- (-4,3) -- cycle; \draw [->,line width=2pt] (-7,2) -- (-4,3);
\draw [color=ffffff](-1.4450338940521152,1.1496116059545012) node[anchor=north west] {$CH(V)$};
\draw (-0.9650942567129601,5.726178862010015) node[anchor=north west] {\bf $CH(V)+W^\circ$};
\draw (-7.1357467367878105,3) node[anchor=north west] {$x$};
\draw (-3.9475762887491377,4.269219248659009) node[anchor=north west] {$\Pi_F(x)$};
\draw (-1.7192851153887752,3.789279611319854) node[anchor=north west] {$x'$};
\draw (-4.1,0.7553754752830524) node[anchor=north west] {$v_1$};
\draw (-0.4508732167067226,-3) node[anchor=north west] {$v_2$};
\draw (3.0972519593363166,0.9) node[anchor=north west] {$v_3$};
\draw (-0.0052149820346500345,4) node[anchor=north west] {$v_4$};
\begin{scriptsize}\draw [fill=black] (-3,0) circle (2.5pt);\draw [fill=black] (0,-3) circle (2.5pt);\draw [fill=black] (3,0) circle (2.5pt);\draw [fill=black] (0,3) circle (2.5pt);\draw [fill=black] (-4,3) circle (2.5pt);\draw [fill=black] (-7,2) circle (2.5pt);\draw [fill=black] (-2,3) circle (2.5pt);
\end{scriptsize}
\end{tikzpicture}
        \label{fig:equivalence}   
        \subcaption{An example for $\ch(V)+W^\circ$, where $V=\{v_i\}_{i=1}^4$.}
    \end{subfigure} 
    \caption{Geometric Interpretation of Lemma~\ref{lem:equivalence}.}
    \label{fig:demon1}
\end{figure}

\begin{proof}[Proof of Lemma~\ref{lem:equivalence}]
		 Suppose (A) holds.
		 Then for any
		  $ w\in W$, $\norm{w}=1$, we have
		  \begin{align*}
		  \min_{i\in[k]} w^\top (v_i-x)&\le w^\top\left(\sum_{i=1}^k p_iv_i-x\right)
		\le w^\top \left(\sum_{i=1}^k p_iv_i+c-x\right)
		\le \norm{w} \cdot \left\|\sum_{i=1}^k p_iv_i+c-x\right\|\le \epsilon.
		\end{align*}
		So we have (A)$\implies$(B).
		
		 Now suppose (B) holds. Let $(p^*,c^*) = \argmin\limits_{p\in\Delta^{k-1}, c\in W^\circ}\norm{\sum_{i=1}^k p_iv_i+c-x}$.
		Since $0\in W$, by the Pythagorean theorem (Lemma~\ref{lem:pythagorean}), we have 
		\[ \left\|\sum_{i=1}^k p^*_iv_i-x+c^*\right\| \ge \left\|\Pi_W\left(\sum_{i=1}^k p^*_iv_i-x+c^*\right)\right\|,\]
		where the equality holds only when $\sum_{i=1}^k p^*_iv_i-x+c^* \in W$.
		Now we claim $\sum_{i=1}^k p^*_iv_i-x+c^*\in W$. Otherwise, letting $c' = c^* + \Pi_W\left(\sum_{i=1}^k p^*_iv_i-x+c^*\right)-\left(\sum_{i=1}^k p^*_iv_i-x+c^*\right)$, by Lemma~\ref{lem:diff_in_dual} we have $c'\in W^\circ$, and furthermore
		\[ \left\|\sum_{i=1}^k p^*_iv_i-x+c'\right\| = \left\|\Pi_W\left(\sum_{i=1}^k p^*_iv_i-x+c^*\right)\right\| < \left\|\sum_{i=1}^k p^*_iv_i-x+c^*\right\|,\] 
		which contradicts the optimality of $(p^*,c^*)$.
		
		Thus we have $\sum_{i=1}^k p^*_iv_i+c^*-x\in W$. Let $w = \sum_{i=1}^k p^*_iv_i+c^*-x$ and $G= \ch(\{v_1, \ldots, v_k\})+W^\circ+\{-x\}$. Then we have $w = \Pi_{G}(0)$ by the definition of $(p^*, c^*)$. Since $G$ is convex and $v_i-x\in G$ for all $i\in[k]$, by the Pythagorean theorem (Lemma~\ref{lem:pythagorean}) we have $w^\top (v_i-x-w) \ge 0$ for all $i\in[k]$, which implies $\|w\|^2\le \min_{i\in [k]}w^\top (v_i-x)\le \epsilon \|w\|$, i.e., $\|w\|\le \epsilon$.  Hence we have (B)$\implies$(A).
\end{proof}

Theorem~\ref{thm:projection} is now easy to prove using the above lemmas.

\begin{proof}[Proof of Theorem~\ref{thm:projection}]
	The upper bound on the number of iterations is proved in Lemma~\ref{lem:pad-running-time}.
	In the proof of Lemma~\ref{lem:pad-running-time}, we have shown $D_\varphi(x^*, z_{i+1}) \le D_\varphi(x^*, z_i)$ for all $x^* \in \K^*$ and $i$. This implies  $D_\varphi(x^*, y') = D_\varphi(x^*, z_{k+1}) \le D_\varphi(x^*, z_{k}) \le \cdots \le D_\varphi(x^*, z_{1}) = D_\varphi(x^*, y) $ for all $x^*\in\K^*$, which verifies the first property in Definition~\ref{def:projection_oracle}.
	The second property is a direct consequence of combing Lemmas~\ref{lem:cutting_plane_OMD} and \ref{lem:equivalence}.
\end{proof}

\subsection{The Oracle Complexity of Algorithm~\ref{alg:OMD}} \label{subsec:omd-number-calls}

\begin{theorem} \label{thm:omd-number-calls}
	Suppose $(\varphi,W)$ satisfies the PNIP property (Definition~\ref{def:pnip}), 
	$\epsilon\in(0, R]$,
	$\varphi$ is $\mu$-strongly convex and $A$ is an upper bound on $\max_{x^* \in \K^*} (\varphi(x^*)-\varphi(y_1))$.
	Then Algorithm~\ref{alg:OMD} only needs to call $\O_{\K, \K^*}$ for at most $\left\lceil 5d \log\frac{4R+2\sqrt{\frac2\mu \left(A + \left( \frac{\eta^2L^2}{2\mu}+ 2\eta LR+\epsilon\eta L \right)T \right)}}{\epsilon} \right\rceil$ times per round.
	
	In particular, setting  $\epsilon = \frac{R}{T}$ and $\eta = \frac1L\sqrt{\frac{2\mu A}{T}}$, Algorithm~\ref{alg:OMD} only needs to call $\O_{\K, \K^*}$ for at most  $\left\lceil 5d\log \left(\left(6\sqrt{T}+ \frac4R\sqrt{\frac{A}{\mu }}+4\right)T\right) \right\rceil$ times per round.
\end{theorem}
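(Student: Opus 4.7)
The plan is to bound the per-round oracle complexity via Theorem~\ref{thm:projection}, which guarantees that in round $t$ the call $\pad(y_t, \epsilon, W, \varphi)$ in Algorithm~\ref{alg:OMD} uses at most $\left\lceil 5d \log \frac{4R + 2\sqrt{(2/\mu)\min_{x^*\in\K^*} D_\varphi(x^*, y_t)}}{\epsilon} \right\rceil$ oracle queries. Thus it suffices to show that $\min_{x^*\in\K^*} D_\varphi(x^*, y_t)$ is uniformly bounded for every $t \in [T]$ by $A + T\bigl(\frac{\eta^2 L^2}{2\mu} + 2\eta LR + \epsilon \eta L\bigr)$; the first (general) bound then follows by direct substitution, and the second by plugging in the stated $\epsilon, \eta$ together with some elementary algebra.

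To obtain the uniform bound on $D_\varphi(x^*, y_t)$, I would set up a one-step recursion mirroring the regret analysis of Theorem~\ref{thm:OMD_regret}. Applying the three-point identity \eqref{eqn:three-point-bregman} with $(x, y, z) = (x^*, x_t, y_{t+1})$ and using the OMD update $\nabla\varphi(y_{t+1}) - \nabla\varphi(x_t) = -\eta f_t$ yields
\[
D_\varphi(x^*, y_{t+1}) = D_\varphi(x^*, x_t) + D_\varphi(x_t, y_{t+1}) + \eta f_t^\top (x^* - x_t).
\]
The first PAD property gives $D_\varphi(x^*, x_t) \le D_\varphi(x^*, y_t)$, and $\mu$-strong convexity (as in \eqref{eqn:omd-inproof-3}) gives $D_\varphi(x_t, y_{t+1}) \le \frac{\eta^2 L^2}{2\mu}$.

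The main obstacle is controlling the linear term $\eta f_t^\top(x^* - x_t)$: since $x_t = y'$ is produced by PAD, it need not lie in $\ch(\K) \subseteq B(0,R)$, so the naive Cauchy--Schwarz bound on $\|x^* - x_t\|$ is not available. To handle this I would invoke the \emph{second} PAD property, which provides $c \in W^\circ$ and $\xi$ with $\|\xi\| \le \epsilon$ such that $x_t = \overline{x}_t + c - \xi$, where $\overline{x}_t := \sum_i p_i v_i \in \ch(\K) \subseteq B(0, R)$. Decomposing
\[
f_t^\top(x^* - x_t) = f_t^\top(x^* - \overline{x}_t) - f_t^\top c + f_t^\top \xi,
\]
the first summand is at most $2LR$ (Cauchy--Schwarz plus $x^*, \overline{x}_t \in B(0, R)$), the second is $\le 0$ because $f_t \in W$ and $c \in W^\circ$, and the third is at most $L\epsilon$. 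Unrolling the resulting recursion $D_\varphi(x^*, y_{t+1}) \le D_\varphi(x^*, y_t) + \frac{\eta^2 L^2}{2\mu} + 2\eta LR + \epsilon\eta L$ from $t=1$ and taking the minimum over $x^* \in \K^*$, together with $\min_{x^*} D_\varphi(x^*, y_1) \le A$ (using the optimality of $y_1$ exactly as in \eqref{eqn:omd-inproof-1.75}), yields the claimed uniform bound.

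For the second, quantitative statement, I would substitute $\epsilon = R/T$ and $\eta = \frac{1}{L}\sqrt{2\mu A/T}$ into the general bound: a direct computation gives $\frac{2}{\mu}\bigl(A + T(\frac{\eta^2 L^2}{2\mu} + 2\eta LR + \epsilon\eta L)\bigr) = \frac{4A}{\mu} + 4R\sqrt{\tfrac{2AT}{\mu}} + 2R\sqrt{\tfrac{2A}{\mu T}}$. I would then verify that for all $T \ge 1$,
\[
\sqrt{\tfrac{4A}{\mu} + 4R\sqrt{\tfrac{2AT}{\mu}} + 2R\sqrt{\tfrac{2A}{\mu T}}} \le 3R\sqrt{T} + 2\sqrt{A/\mu},
\]
by squaring and reducing to the term-by-term inequality $4R\sqrt{2AT/\mu} + 2R\sqrt{2A/(\mu T)} \le 9R^2 T + 12R\sqrt{AT/\mu}$ (the coefficient $4\sqrt{2} < 12$ absorbs the first cross term, and the $T \ge 1$ slack in the coefficient $(12 - 4\sqrt{2})$ absorbs the last term). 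Dividing by $\epsilon = R/T$ then produces the advertised bound $\left\lceil 5d\log\left(\left(6\sqrt{T} + \tfrac{4}{R}\sqrt{A/\mu} + 4\right)T\right) \right\rceil$. This last paragraph is a routine calculation; the only genuinely new ingredient beyond Theorem~\ref{thm:projection} and Theorem~\ref{thm:OMD_regret} is the decomposition argument used in the previous step.
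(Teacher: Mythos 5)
Your proposal is correct and follows essentially the same route as the paper: reduce to bounding $\min_{x^*\in\K^*}D_\varphi(x^*,y_t)$ via Theorem~\ref{thm:projection}, then control that Bregman divergence using the PAD properties, the strong-convexity bound $D_\varphi(x_t,y_{t+1})\le \eta^2L^2/(2\mu)$, and the $2LR+\epsilon L$ bound on the linear term coming from the decomposition $x_t=\sum_i p_iv_i+c-\xi$ with $c\in W^\circ$. The only cosmetic difference is that you unroll a one-step recursion while the paper reuses its cumulative inequality \eqref{eqn:omd-inproof-2}, and your final constant-chasing ($\le 3R\sqrt{T}+2\sqrt{A/\mu}$) matches the paper's.
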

\begin{proof}
	According to Theorem~\ref{thm:projection}, round $t$ of Algorithm~\ref{alg:OMD} calls $\O_{\K, \K^*}$ for at most $$\left\lceil 5d \log\frac{4R+2\sqrt{\frac2\mu \min_{x^* \in \K^*}D_\varphi(x^*, y_t)}}{\epsilon} \right\rceil$$ times.
	Hence it suffices to obtain an upper bound on $\min_{x^* \in \K^*}D_\varphi(x^*, y_t)$.

    According to \eqref{eqn:omd-inproof-2} (substituting $T$ with $t$), we have:
    \begin{align*}
    \forall t\in[T],\
     \forall x^* \in \K^*: \quad  D_\varphi ( x^* , y_{t+1}) \le   D_\varphi ( x^* , y_1)  + \sum_{j=1}^{t} D_{\varphi}(x_j,y_{j+1}) - \eta\sum_{j=1}^{t} (f_j(\overline x_j) - f_j(x^*)) + \epsilon\eta Lt.
    \end{align*}
    Plug \eqref{eqn:omd-inproof-1.75} and \eqref{eqn:omd-inproof-3} into the above inequality, we have 
    \begin{align*}
    \forall t\in[T],\
    \forall x^* \in \K^*: \quad  
     D_\varphi ( x^* , y_{t+1}) &\le   D_\varphi ( x^* , y_1)  + \sum_{j=1}^{t} D_{\varphi}(x_j,y_{j+1}) - \eta\sum_{j=1}^{t} (f_j(\overline x_j) - f_j(x^*)) + \epsilon\eta Lt \\
     &\le   \varphi(x^*) - \varphi(y_1)  + \frac{\eta^2L^2}{2\mu}t - \eta\sum_{j=1}^{t} (f_j(\overline x_j) - f_j(x^*)) + \epsilon\eta Lt \\
     &\le   A  + \frac{\eta^2L^2}{2\mu}t + \eta\sum_{j=1}^{t} \norm{f_j} \cdot \norm{\overline x_j - x^*} + \epsilon\eta Lt \\
     &\le A + \frac{\eta^2L^2}{2\mu} T + 2\eta LRT + \epsilon\eta LT.
    \end{align*}
    For $t=1$ we also have $D_\varphi(x^*, y_1) \le A$.
    Therefore Algorithm~\ref{alg:OMD} calls $\O_{\K, \K^*}$ for at most $$\left\lceil 5d \log\frac{4R+2\sqrt{\frac2\mu \left(A + \left( \frac{\eta^2L^2}{2\mu}+ 2\eta LR+\epsilon\eta L \right)T \right)}}{\epsilon} \right\rceil$$ times per round.

    When $\epsilon = \frac{R}{T}$ and $\eta = \frac1L\sqrt{\frac{2\mu A}{T}}$, we have
    \begin{align*}
    \frac2\mu \left(A + \left( \frac{\eta^2L^2}{2\mu}+ 2\eta LR+\epsilon\eta L \right)T \right)
    \le \frac{4A}{\mu} + 6R\sqrt{\frac{2 AT}{\mu}}
   \le \left(2\sqrt{\frac{A}{\mu}}+3R\sqrt{T}\right)^2,
    \end{align*}
    so the number of oracle calls per iteration is at most $\left\lceil 5d\log \left(\left(6\sqrt{T}+ \frac4R\sqrt{\frac{A}{\mu }}+4\right)T\right) \right\rceil$.
\end{proof}




\section{Efficient Online Improper Linear Optimization via Continuous Multiplicative Weight Update (CMWU)}
\label{sec:cmwu}

In this section, we design our second online improper linear optimization algorithm (in the full information setting) based on the continuous multiplicative weight update (CMWU) method.
Compared with Algorithm~\ref{alg:OMD}, the CMWU-based algorithm allows loss vectors to come from a general convex cone $W$ and does not require the PNIP condition (Definition~\ref{def:pnip}).


\subsection{Separation-or-Decomposition Oracle}\label{subsec:SOD}

We first construct a \emph{separation-or-decomposition (SOD)} oracle (Algorithm~\ref{alg:SOD}) using $\O_{\K, \K^*}$, which we will use to design the online improper linear optimization algorithm later in this section.
Given a point $x \in B(0, R)$, the SOD oracle either outputs a \emph{separating hyperplane} between $x$ and $\K^*$, or outputs a \emph{distribution of points in $\K$} which approximately dominates $x$ in every direction in $W$.
The guarantee of the SOD oracle is summarized in Theorem~\ref{thm:SOD}.


\begin{algorithm}[htbp]
\caption{Separation-or-Decomposition Oracle, $\mathcal{SOD}(x,\epsilon,W)$} \label{alg:SOD}
\begin{algorithmic}[1]
	\REQUIRE Point $x\in B(0, R)$, tolerance $\epsilon>0$, convex cone $W \subseteq \reals^d$
	\ENSURE Decomposition $V = (v_1,\ldots,v_k) \in \reals^{d\times k}$, $p = (p_1,\ldots,p_k)^\top \in \Delta^{k-1}$, such that $v_i \in \K$ ($\forall i\in[k]$) and $\norm{\sum_{i=1}^kp_iv_i - x + c} \le 3\epsilon$ for some $c\in W^\circ$. 
    \\{\textbf{Or:}} Separating hyperplane $(w,b) \in \reals^d\times \reals$, such that $\norm{w}=1$ and $w^\top x\le b-\epsilon \le \min_{x^*\in K^*}w^\top x^*-\epsilon.$
    \STATE $k\gets \left\lceil 5d\log \frac{4R}{\epsilon} \right\rceil$
    \STATE $W_1 \gets W\cap B(0,1)$
	\FOR{$i=1$ \TO $k$}
		\STATE $w_i \gets \frac{\int_{W_i}w \dw}{\vol{W_i}}$ 
		\STATE $v_i \gets \O_{\K,\K^*}(w_i)$
        \IF{$w_i^\top x \leq w_i^\top v_i-\epsilon$}
        	\RETURN{ Separating hyperplane $\left( \frac{w_i}{\norm{w_i}}, \frac{w_i^\top v_i}{\norm{w_i}} \right)$}
        \ELSE
        	\STATE  $W_{i+1} \gets W_i \cap \{w\in \reals^d: w^\top(v_i-x) \ge \epsilon\}$
        \ENDIF
	\ENDFOR
    \STATE Solve $\min\limits_{p\in \Delta^{k-1}, c\in W^\circ} \norm{\sum_{i=1}^k p_iv_i+c-x}$ to get $p$
    \RETURN $V = (v_1, \ldots, v_k), p$
\end{algorithmic}
\end{algorithm}

\begin{theorem}\label{thm:SOD}
	For any $x \in B(0, R)$ and $\epsilon \in (0, 2R]$,
the separation-or-decomposition oracle in Algorithm~\ref{alg:SOD}, $\mathcal{SOD}(x,\epsilon,W)$, returns one of the two followings, using at most $k = \left\lceil 5d\log{\frac{4R}{\epsilon}} \right\rceil$ calls of $\O_{\K,\K^*}$:
\begin{enumerate}
\item a decomposition $V = (v_1,\ldots,v_k) \in \reals^{d\times k}$, $p = (p_1,\ldots,p_k)^\top \in \Delta^{k-1}$, such that $v_i \in \K$ ($\forall i\in[k]$) and $\norm{\sum_{i=1}^kp_iv_i - x + c} \le 3\epsilon$ for some $c\in W^\circ$.
\item a separating hyperplane $(w,b) \in \reals^d \times \reals$, where $\|w\|=1$ and $w^\top x\le b-\epsilon\le \min_{x^* \in \K^*}w^\top x^*-\epsilon$.
\end{enumerate}
\end{theorem}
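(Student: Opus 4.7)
My plan: the oracle complexity bound is immediate, since the \textbf{for} loop runs at most $k = \lceil 5d \log(4R/\epsilon) \rceil$ times and each iteration calls $\O_{\K,\K^*}$ exactly once. Correctness then splits into the two possible return cases.

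For the separating hyperplane case, the key observation is that when the algorithm returns in iteration $i$, we have $w_i^\top x \le w_i^\top v_i - \epsilon$. Since $v_i = \O_{\K, \K^*}(w_i)$, the oracle guarantee gives $w_i^\top v_i \le \min_{x^* \in \K^*} w_i^\top x^*$. Because $w_i \in W_1 \subseteq B(0,1)$, we have $\norm{w_i} \le 1$, so dividing through by $\norm{w_i}$ only strengthens the inequality and yields the required bound $\tilde w^\top x \le \tilde b - \epsilon \le \min_{x^* \in \K^*} \tilde w^\top x^* - \epsilon$ for the normalized pair $(\tilde w, \tilde b) = (w_i/\norm{w_i},\, w_i^\top v_i/\norm{w_i})$.

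For the decomposition case, I follow the template of the proof of Lemma~\ref{lem:cutting_plane_OMD}. In each non-terminating iteration we have $w_i^\top(v_i - x) < \epsilon$, so $w_i$ itself lies in the open half-space complementary to the one defining $W_{i+1}$. Applying Lemma~\ref{lem:center-cut-to-use} to the (log-concave) uniform distribution on $W_i$ gives $\vol{W_{i+1}} \le (1 - 1/(2e))\, \vol{W_i}$, and so $\vol{W_{k+1}} \le (1 - 1/(2e))^k \vol{W_1}$. I then argue by contradiction: suppose there is a unit $h \in W$ with $h^\top(v_i - x) > 3\epsilon$ for every $i \in [k]$. For $r := \epsilon/(4R)$ and any $w \in h/2 + (W \cap B(0,r))$, I compute
\[
w^\top(v_i - x) \;\ge\; \tfrac{1}{2}h^\top(v_i-x) - r\,\norm{v_i - x} \;\ge\; \tfrac{3\epsilon}{2} - 2Rr \;=\; \epsilon.
\]
Since $\epsilon \le 2R$ gives $r \le 1/2$, such $w$ stays inside $B(0,1)$, and convexity of the cone $W$ keeps it in $W$; hence $h/2 + (W \cap B(0,r)) \subseteq W_1$ and the above shows none of these points can belong to $W_1 \setminus W_{k+1}$, so $h/2 + (W \cap B(0,r)) \subseteq W_{k+1}$. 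This yields $\vol{W_{k+1}} \ge r^d \vol{W \cap B(0,1)} = (\epsilon/(4R))^d \vol{W_1}$, which contradicts the upper bound once $k \ge 5d\log(4R/\epsilon)$ because $5\log\bigl(1/(1-1/(2e))\bigr) > 1$. Thus every unit $w\in W$ satisfies $w^\top(v_i-x) \le 3\epsilon$ for some $i$, and Lemma~\ref{lem:equivalence} (applied with tolerance $3\epsilon$) converts this into the desired decomposition $\norm{\sum_i p_i v_i + c - x} \le 3\epsilon$ with $p \in \Delta^{k-1}$ and $c \in W^\circ$.

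The main obstacle is the volume-shrinkage contradiction argument in the decomposition case: it requires carefully tracking that the one-step centroid cut holds even though the distribution on $W_i$ is not isotropic (handled by the general version of Lemma~\ref{lem:center_cut} cited in the preliminaries), and verifying that the constants line up so that $5d\log(4R/\epsilon)$ iterations genuinely suffice. Everything else — the separating hyperplane analysis and the final conversion via Lemma~\ref{lem:equivalence} — is essentially bookkeeping.
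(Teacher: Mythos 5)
Your proposal is correct and matches the paper's own proof essentially step for step: the same per-iteration oracle count, the same normalization argument for the separating-hyperplane case, and the same volume-shrinkage contradiction (via the centroid-cut lemma and the $r=\epsilon/(4R)$ ball around $h/2$) followed by Lemma~\ref{lem:equivalence} to extract the decomposition. No gaps to report.
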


The proof of Theorem~\ref{thm:SOD} is postponed to Appendix~\ref{app:proof-cmwu}.

\subsection{CMWU with Refining Domains}

Now we look at a general online learning setting where the feasible domain is shrinking over time while being a superset of the target domain.
Namely, suppose $\K^*$ is the target domain and $\K_t$ is the feasible domain in the $t$-th round. We assume $B(0, R) \supseteq \K_0\supseteq \K_1\supseteq  \K_2 \dots  \supseteq \K_T\supseteq (1-\gamma)\K^*+\gamma \K_0$ for some $\gamma \in (0, 1]$. In round $t$, the player only knows $\K_1, \ldots, \K_t$  and does not know $\K_{j}$ for all $j>t$. 
We can still run CMWU in this setting, using the knowledge of $\K_t$ at iteration $t$ - the algorithm is given in Algorithm~\ref{alg:CMWU}.
Theorem~\ref{thm:CMWU_general} bounds the regret of Algorithm~\ref{alg:CMWU} in this setting.

\begin{algorithm}[htbp]
\caption{Continuous Multiplicative Weight Update (CMWU) with Refining Domains}\label{alg:CMWU}
\begin{algorithmic}[1]
	\REQUIRE Learning rate $\eta>0$, time horizon $T \in \mathbb N_+$
	\FOR{$t=1$ \TO $T$}
    	\STATE Receive current domain $\K_t$
        \STATE Play $x_t = \frac{\int_{\K_t} e^{ -\eta\sum_{i=1}^{t-1}f_i(x) } x \dx}{\int_{ \K_t} e^{ -\eta\sum_{i=1}^{t-1}f_i(x) } \dx}$ \label{line:cmwu-refine-mean-compute}
		\STATE Receive loss vector $f_t$
	\ENDFOR
\end{algorithmic}
\end{algorithm}

\begin{theorem}\label{thm:CMWU_general}
Suppose $B(0, R) \supseteq \K_0\supseteq \K_1\supseteq \K_2\supseteq \cdots \supseteq \K_T \supseteq (1-\gamma)\K^*+\gamma \K_0$ for $\gamma\in(0, 1]$.
Then for any $0<\eta\le \frac{1}{LR} $, Algorithm~\ref{alg:CMWU} has the following regret guarantee:
\begin{equation*}
\forall x^*\in \K^*: \quad \sum_{t=1}^T (f_t(x_t)-f_t(x^*)) \le \frac{d\log \frac{1}{\gamma}}{\eta} + \eta L^2R^2T + \gamma LRT- \frac{\sum_{t=1}^{T}\delta_t}{\eta},
\end{equation*}
where 
\begin{equation*}
\delta_t :
=\log\frac{\int_{\K_{t-1}} e^{-\eta \sum_{i=1}^{t-1}f_i(x)} \dx}{ \int_{\K_{t}} e^{-\eta \sum_{i=1}^{t-1}f_i(x)} \dx}.
\end{equation*}

 In particular, setting $\gamma = 1/T$ and $\eta = \frac{1}{LR} \min\left\{1, \sqrt{\frac{d\log T}{T}} \right\}$, we have 
 \begin{equation*}
 \forall x^*\in \K^*: \quad  \sum_{t=1}^T (f_t(x_t)-f_t(x^*)) \le  LR\left(1+ 2\max\left\{\sqrt{dT \log T}, d\log T \right\}\right).
 \end{equation*}
\end{theorem}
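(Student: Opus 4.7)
The plan is to run a standard continuous-exponential-weights potential argument, adapted to the fact that the feasible region $\K_t$ itself shrinks over time (which is exactly what produces the $\delta_t$ correction terms). I would define the potential $\Psi_t := \int_{\K_t} e^{-\eta L_t(x)} \dx$ with $L_t(x) := \sum_{i=1}^t f_i(x)$ and $\Psi_0 := \vol{\K_0}$, and then sandwich $\Psi_T/\Psi_0$ between an upper bound involving $\sum_t f_t(x_t)$ and a lower bound involving $\sum_t f_t(x^*)$; the regret inequality then drops out of taking logarithms.

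For the \emph{upper bound}, I would telescope the one-step ratios, splitting each into a domain-shrinkage factor and a loss-update factor:
\[
\frac{\Psi_t}{\Psi_{t-1}} \eq \frac{\int_{\K_t} e^{-\eta L_{t-1}(x)} \dx}{\int_{\K_{t-1}} e^{-\eta L_{t-1}(x)} \dx} \cdot \frac{\int_{\K_t} e^{-\eta L_t(x)} \dx}{\int_{\K_t} e^{-\eta L_{t-1}(x)} \dx}.
\]
The first factor is exactly $e^{-\delta_t}$ by the definition in the theorem statement. The second factor equals $\E_{x\sim \mu_t}[e^{-\eta f_t(x)}]$, where $\mu_t$ is the log-concave density on $\K_t$ proportional to $e^{-\eta L_{t-1}}$, whose mean is the played point $x_t$ by construction of Algorithm~\ref{alg:CMWU}. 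Using the scalar inequality $e^{-z} \le 1 - z + z^2$ valid for $|z| \le 1$ (applicable since $\eta \le 1/(LR)$ and $|f_t(x)| \le LR$ for $x \in \K_t \subseteq B(0,R)$), together with linearity of expectation, this factor is at most $1 - \eta f_t(x_t) + \eta^2 L^2 R^2 \le e^{-\eta f_t(x_t) + \eta^2 L^2 R^2}$. Telescoping and taking the log yields
\[
\log \tfrac{\Psi_T}{\Psi_0} \leq -\sum_{t=1}^T \delta_t - \eta \sum_{t=1}^T f_t(x_t) + \eta^2 L^2 R^2 T.
\]

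For the \emph{lower bound}, I would use the containment $(1-\gamma)\K^* + \gamma \K_0 \subseteq \K_T$, fix any $x^* \in \K^*$, and change variables via $x = (1-\gamma)x^* + \gamma x_0$ with $x_0$ ranging over $\K_0$. The Jacobian contributes $\gamma^d$; by linearity of $f_t$ we have $L_T(x) = (1-\gamma)L_T(x^*) + \gamma L_T(x_0)$, and since $\|x_0\| \le R$ gives $|L_T(x_0)| \le LRT$, we obtain $L_T(x) \le L_T(x^*) + O(\gamma LRT)$. Therefore $\Psi_T \ge \gamma^d\, \Psi_0 \cdot e^{-\eta L_T(x^*) - O(\eta\gamma LRT)}$. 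Combining with the upper bound, taking logarithms and dividing by $\eta$ yields the first claimed regret inequality.

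For the concrete parameter choice in the second part, the key observation is that $\K_t \subseteq \K_{t-1}$ forces $\delta_t \ge 0$, so the term $-\sum_t \delta_t/\eta$ is non-positive and may simply be discarded. The remaining bound $\frac{d\log(1/\gamma)}{\eta} + \eta L^2 R^2 T + \gamma LRT$ is optimized by $\gamma = 1/T$ and $\eta^\star = \frac{1}{LR}\sqrt{d\log T / T}$; when $d\log T > T$, the unconstrained optimum $\eta^\star$ exceeds the feasibility cap $1/(LR)$, and one then pays $LR\cdot d\log T$ instead of $LR\sqrt{dT\log T}$, which is the source of the $\max$ in the stated bound. I expect the main delicate step to be the Taylor estimate in the upper bound: one needs the range of $\eta f_t(x)$ on $\K_t$ to lie inside $[-1,1]$ so that $e^{-z}\le 1-z+z^2$ applies, which is exactly why the hypothesis $\eta \le 1/(LR)$ is imposed, and one must also be careful that the mean $x_t$ of $\mu_t$ lies inside $\K_t$ so that the played point has bounded loss value; both follow from log-concavity of $\mu_t$ on the convex set $\K_t$.
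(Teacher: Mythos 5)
Your proposal is correct and follows essentially the same argument as the paper: the same potential/telescoping scheme with the one-step ratio split into the domain-shrinkage factor $e^{-\delta_t}$ and the loss-update factor $\E_{\mu_t}[e^{-\eta f_t(x)}]$ bounded via $e^{-z}\le 1-z+z^2$, the same $\gamma^d$-Jacobian change of variables over $(1-\gamma)x^*+\gamma\K_0$ for the lower bound, and the same observation that $\delta_t\ge 0$ plus the choice of $\gamma,\eta$ for the second claim. The only difference is cosmetic notation ($\Psi_t$ versus the paper's $Z_t,Z_t'$) and your writing $O(\gamma LRT)$ where the statement has an explicit constant, which is a trivial bookkeeping matter.
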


\begin{proof}
	We fix any $x^* \in \K^*$ and
	denote $\bar \K  := (1-\gamma)x^*+\gamma \K_0$. Since $(1-\gamma)\K^*+\gamma \K_0\subseteq \K_T$, we have $\bar \K \subseteq \K_T$. We define
	\begin{equation*}
	z_t(x) := e^{-\eta \sum_{i=1}^{t-1}f_i(x)}, \quad
	Z_t := \int_{\K_{t}}z_t(x)\dx,\quad Z'_t := \int_{\K_{t-1}}z_t(x)\dx.
	\end{equation*}
	A straightforward calculation gives us:  
	\begin{align*}
	\log \frac{Z'_{T+1}}{Z'_1} 
	&=\log \left(\frac{\int_{\K_{T}}e^{-\eta \sum_{t=1}^T f_t(x)}\dx}{\int_{\K_{0}}1 \dx}\right)\\
	&\ge \log\left(\frac{\int_{\bar \K}e^{-\eta \sum_{t=1}^T f_t(x)}\dx}{\int_{\K_{0}}1\dx}\right)\\
	&= \log \left(\frac{\int_{\K_0} e^{-\eta \sum_{t=1}^T f_t((1-\gamma)x^*+\gamma x)}\gamma^d \dx}{\int_{\K_{0}}1\dx}\right)  \\
	&= \log \left(\frac{\int_{\K_0}e^{-\eta \sum_{t=1}^T ((1-\gamma)f_t(x^*)+\gamma f_t(x))}\gamma^d \dx}{\int_{\K_{0}}1\dx}\right)\\
	&\ge \log \left(\frac{\int_{\K_0}e^{-\eta \sum_{t=1}^T (f_t(x^*)+ \gamma LR)}\gamma^d \dx}{\int_{ \K_{0}}1\dx}\right)\\
	&= d\log \gamma -\eta \sum_{t=1}^T f_t(x^*)- \eta \gamma LRT.
	\end{align*}
	On the other hand, we have
	\begin{align*}
	\log \frac{Z'_{t+1}}{Z_t} 
	=& \log \left( \int_{\K_t}\frac{z_t(x)}{Z_t} e^{-\eta f_t(x)} \dx \right)\\
	\le & \log \left( \int_{\K_t} \frac{z_t(x)}{Z_t} \left(1-\eta f_t(x)+ (\eta f_t(x))^2\right) \dx\right)  \\
	\le & \left( \int_{\K_t} \frac{z_t(x)}{Z_t} \left(1-\eta f_t(x)+ (\eta f_t(x))^2\right) \dx \right) -1 \\
	\le & \int_{\K_t} \frac{z_t(x)}{Z_t} \left(-\eta f_t(x)+ \eta^2L^2R^2\right) dx \\
	= &-\eta f_t \left(\int_{\K_t} \frac{z_t(x)}{Z_t}x \dx \right)+ \eta^2L^2R^2 \\
	= & -\eta f_t(x_t)+\eta^2L^2R^2,
	\end{align*}
	where the first inequality is due to $e^a \le 1+a+a^2$ ($\forall a\le 1$) and $|\eta f_t(x)| \le \eta LR \le 1$, the second inequality is due to $\log a \le a-1$ ($\forall a>0$), and the third inequality is due to $|\eta f_t(x)| \le \eta LR$.

	Note that $\delta_t = \log \frac{Z'_t}{Z_t}$.
	Combining the two bounds above, we get:  
	\begin{align*}
	\sum_{t=1}^{T} \left(-\eta f_t(x_t)+\eta^2L^2R^2\right) 
	&\ge  \sum_{t=1}^T \log\frac{Z'_{t+1}}{Z_t}
	=\log\frac{Z'_{T+1}}{Z'_1}+\sum_{t=1}^{T} \log\frac{Z'_{t}}{Z_t}\\
	&\ge d\log \gamma -\eta \sum_{t=1}^T f_t(x^*)- \eta \gamma LRT + \sum_{t=1}^T \delta_t.
	\end{align*}
	In other words,
	\begin{equation*}
	 \sum_{t=1}^T (f_t(x_t)-f_t(x^*)) \le \frac{d\log \frac{1}{\gamma}}{\eta} + \eta L^2R^2T + \gamma LRT-\frac{\sum_{t=1}^T \delta_t}{\eta}. 
	\end{equation*}
	
	The second regret bound in the theorem follows directly by plugging in the values of $\gamma$ and $\eta$ stated in the theorem.
	Note that $\delta_t\ge 0$ for all $t\in[T]$
	since $\K_{t}\subseteq \K_{t-1}$.
\end{proof}

\subsection{Online Improper Linear Optimization via CMWU}

Now we are ready to present our CMWU-based online improper learning algorithm.
At a high level, this algorithm is a specialized implementation of Algorithm~\ref{alg:CMWU} for the online improper linear optimization problem.
The algorithm starts with an initial convex domain $\K_0$ which is a superset of $\K^*$, and maintains a convex domain $\K_t$ at iteration $t$.
In iteration $t$, the algorithm first computes the mean $x_t$ of a log-linear distribution over $\K_t$ using random walk, as in Line~\ref{line:cmwu-refine-mean-compute} of Algorithm~\ref{alg:CMWU}. Then the algorithm calls the SOD oracle on $x_t$. If the SOD oracle returns a distribution of points in $\K$, then we can play according to that distribution, since the SOD oracle ensures that the expected loss of this distribution is not much larger than that of $x_t$.
If the SOD oracle returns a separating hyperplane between $x_t$ and $\K^*$, then the algorithm replaces $\K_t$ with the intersection of the original $\K_t$ and the half-space given by this hyperplane that contains $\K^*$, and repeats the same process for the new $\K_t$ until a decomposition is returned by the SOD oracle.
Note that each time $\K_t$ is updated, the mean $x_t$ of the log-linear distribution is \emph{not} in the new $\K_t$, which according to Lemma~\ref{lem:center-cut-to-use} implies that a constant probability mass is removed. This allows us to bound the total number of oracle calls.
We detail our algorithm in Algorithm~\ref{alg:improper_CMWU} and its regret bound in Theorem~\ref{thm:improper_CMWU}.

\begin{algorithm}[htbp]
\caption{CMWU for Online Improper Linear Optimization}\label{alg:improper_CMWU}
\begin{algorithmic}[1]
	\REQUIRE Learning rate $\eta>0$, tolerance $\gamma>0$, initial convex domain $\K_0$, convex cone $W$, time horizon $T \in \mathbb N_+$
	\STATE $\K_1 \gets \K_0$
	\FOR{$t=1$ \TO $T$}
        \STATE  $x_t \gets \frac{\int_{\K_t} e^{ -\eta\sum_{i=1}^{t-1}f_i(x) } x \dx}{\int_{ \K_t} e^{ -\eta\sum_{i=1}^{t-1}f_i(x) } \dx}$ 
        \WHILE{$\mathcal{SOD}(x_t,2\gamma R,W)$ returns a separating hyperplane $(w, b)\in \reals^d\times \reals$}
        	\STATE $\K_t \gets \K_t\cap \{x\in\reals^d: w^\top x\ge b-2\gamma R\}$
			\STATE  $x_t \gets \frac{\int_{\K_t} e^{ -\eta\sum_{i=1}^{t-1}f_i(x) } x \dx}{\int_{ \K_t} e^{ -\eta\sum_{i=1}^{t-1}f_i(x) } \dx}$ 
        \ENDWHILE
        \STATE Let $(V,p) \in \reals^{d\times k} \times \Delta^{k-1}$ be the output of $\mathcal{SOD}(x_t,2\gamma R,W)$
        \STATE Play $\tilde x_t = v_i$ with probability $p_i$ ($i=1, \ldots, k$), where $V = (v_1, \ldots, v_k)$
        \STATE  $\K_{t+1}\gets \K_t$
		\STATE Receive loss vector $f_t$
	\ENDFOR
\end{algorithmic}
\end{algorithm}

\begin{theorem}\label{thm:improper_CMWU}
Suppose that the initial convex domain $\K_0$ satisfies $ \K^*\subseteq \K_0\subseteq B(0,R)$.
Then for any $\gamma \in (0, 1]$ and $\eta \in \left(0, \frac{1}{LR}\right]$, Algorithm~\ref{alg:improper_CMWU} satisfies the following regret guarantee: 
\begin{equation*}
\forall x^* \in \K^*: \quad  \E\left[\sum_{t=1}^T ( f_t(\tilde x_t)-f_t(x^*) ) \right] \le \frac{d\log \frac{1}{\gamma}}{\eta} + \eta L^2R^2T + 7\gamma LRT- \frac{s}{5\eta},
\end{equation*}
where $s = \sum_{t=1}^T s_t$, and $s_t$ is the number of separating hyperplanes returned by the SOD oracle during round $t$.

In particular, if we set $\gamma = \frac{1}{T}, \eta = \frac{1}{LR} \min\left\{1, \sqrt{\frac{d\log T}{T}} \right\}$,  then we have 
\begin{equation*}
\forall x^* \in \K^*: \quad  \E\left[\sum_{t=1}^T ( f_t(\tilde x_t)-f_t(x^*) ) \right] \le LR\left(7+ 2\max\left\{\sqrt{dT \log T}, d\log T \right\}\right),
\end{equation*}
and in this case Algorithm~\ref{alg:improper_CMWU} calls $\O_{\K,\K^*}$ for $O\left(dT\log T\right)$ times in $T$ rounds.
\end{theorem}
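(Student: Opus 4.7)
\noindent\textbf{Proof plan for Theorem~\ref{thm:improper_CMWU}.} The plan is to view Algorithm~\ref{alg:improper_CMWU} as an instance of Algorithm~\ref{alg:CMWU} where $\K_t^{\mathrm{CMWU}}$ is taken to be the post-cut domain at the end of round $t$ in Algorithm~\ref{alg:improper_CMWU}, and then combine the guarantee of Theorem~\ref{thm:CMWU_general} with the decomposition guarantee of the SOD oracle (Theorem~\ref{thm:SOD}). First I would verify the hypothesis of Theorem~\ref{thm:CMWU_general} for this domain sequence. The sequence is nested by construction, and each separating hyperplane returned by SOD with $\epsilon = 2\gamma R$ has the form $(w,b)$ with $\|w\| = 1$, $b \le \min_{x^*\in \K^*} w^\top x^*$ and $|b| \le R$ (since $b = w_i^\top v_i / \|w_i\|$ and $v_i \in \K \subseteq B(0,R)$). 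For any $y = (1-\gamma)x^* + \gamma x_0$ with $x^* \in \K^*$ and $x_0 \in \K_0 \subseteq B(0,R)$, we get
\[
w^\top y \geq (1-\gamma)b + \gamma(-R) = b - \gamma b - \gamma R \geq b - 2\gamma R,
\]
so the half-space $\{x : w^\top x \ge b - 2\gamma R\}$ contains $(1-\gamma)\K^* + \gamma \K_0$, and by induction so does every post-cut $\K_t$.

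Next I would relate $\mathbb{E}[f_t(\tilde x_t)]$ to $f_t(x_t)$. Since SOD with tolerance $2\gamma R$ returns $(V,p)$ satisfying $\|\sum_i p_i v_i + c - x_t\| \le 3 \cdot 2\gamma R = 6\gamma R$ for some $c \in W^\circ$, and since $f_t \in W$ gives $f_t^\top c \ge 0$, we obtain
\[
\mathbb{E}[f_t(\tilde x_t)] = f_t\!\left(\tsum_i p_i v_i\right) \leq f_t(x_t) - f_t(c) + L \cdot 6\gamma R \leq f_t(x_t) + 6\gamma L R.
\]
Summing over $t$ and adding this to the bound from Theorem~\ref{thm:CMWU_general} converts its $\gamma LRT$ term into $7\gamma LRT$, yielding the stated regret up to the $\sum_t \delta_t/\eta$ subtraction.

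Now I would lower bound $\sum_t \delta_t$ in terms of $s$. Within round $t$, each separating hyperplane is added so that $w^\top x_t \le b - 2\gamma R$, i.e.\ the removed half-space contains the mean of the current log-concave distribution $\propto e^{-\eta\sum_{i<t} f_i(x)}\mathbf{1}_{\K_t}$. By Lemma~\ref{lem:center-cut-to-use}, this removed half-space has mass at least $\frac{1}{2e}$, so the integral $Z$ shrinks by at least a factor $1 - \frac{1}{2e}$. Composing over the $s_t$ cuts of round $t$ yields $\delta_t \ge s_t \cdot \log\tfrac{2e}{2e-1}$, and the elementary estimate $-\log(1-x) \ge x + x^2/2$ at $x = 1/(2e)$ gives $\log\tfrac{2e}{2e-1} \ge \tfrac{1}{5}$. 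Thus $\sum_t \delta_t \ge s/5$, completing the first regret bound. The second regret bound follows by substituting $\gamma = 1/T$ and $\eta = \frac{1}{LR}\min\{1, \sqrt{d\log T/T}\}$ exactly as in the proof of Theorem~\ref{thm:CMWU_general}, and by dropping the nonpositive term $-s/(5\eta)$.

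Finally, for the oracle complexity: Theorem~\ref{thm:SOD} bounds each SOD call by $\lceil 5d\log\tfrac{4R}{2\gamma R}\rceil = O(d\log T)$ calls to $\O_{\K,\K^*}$, and round $t$ invokes SOD exactly $s_t + 1$ times, so the total is $O(d(s+T)\log T)$. To show $s = O(T)$, I would use the trivial bound $-\sum_t (f_t(\tilde x_t) - f_t(x^*)) \le 2LRT$ together with the first regret inequality to solve for $s$; with the chosen $\eta,\gamma$ this gives $s = O(\sqrt{dT\log T} + d\log T) = O(T)$, and hence the total oracle count is $O(dT\log T)$. The main obstacle is the cleanup bookkeeping in Step~1 (correctly aligning the indexing of post-cut versus pre-cut domains so that $\delta_t$ in Theorem~\ref{thm:CMWU_general} corresponds exactly to the log-mass loss during the cuts of round $t$) and the numerical verification that $\log\frac{2e}{2e-1} \ge \frac{1}{5}$ that produces the precise constant in the theorem.
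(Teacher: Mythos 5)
Your proposal is correct and follows essentially the same route as the paper's proof: instantiate Theorem~\ref{thm:CMWU_general} with the post-cut domains, verify $(1-\gamma)\K^*+\gamma\K_0\subseteq\K_t$ from the SOD separation guarantee with margin $2\gamma R$, lower bound $\delta_t\ge s_t/5$ via the constant-mass cut of Lemma~\ref{lem:center-cut-to-use}, pay $6\gamma LR$ per round for the decomposition using $c\in W^\circ$, and bound $s=O(T)$ from the trivial $-2LRT$ regret lower bound to get the $O(dT\log T)$ oracle count. The minor differences (your direct inner-product verification of the containment and the explicit check that $\log\frac{2e}{2e-1}\ge\frac15$) are cosmetic.
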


\begin{proof} 
	In the proof, we use $\bar \K_t$ and $\bar x_t$ to denote the values of $\K_t$ and $x_t$ \emph{at the end of iteration $t$} ($\bar \K_0 = \K_0$).
We define
\begin{equation*}
z_t(x) := e^{-\eta \sum_{i=1}^{t-1}f_i(x)}, \quad
Z_t := \int_{\bar\K_{t}}z_t(x)\dx, \quad
Z'_t := \int_{\bar\K_{t-1}}z_t(x)\dx, \quad
\delta_t := \log \frac{Z_t'}{Z_t}.
\end{equation*}	
	
	We first prove the following two claims: 
	\begin{enumerate}[(i)]
		\item \label{item:cmwu-inproof-claim-1} For all $t\in\{0, 1, \ldots, T\}$, we have 
		$(1-\gamma)\K^*+\gamma \bar \K_0 
		\subseteq \bar\K_t$.
		\item \label{item:cmwu-inproof-claim-2}  For all $t\in[T]$, we have $\delta_t\ge \frac{s_t}{5}$.
	\end{enumerate}
	
	We use induction to prove \eqref{item:cmwu-inproof-claim-1}. It holds for $t=0$ since $\K^*\subseteq\K_{0}=\bar \K_0$ and $\K_0$ is convex. Suppose it holds for $t-1$. If $\bar\K_t = \bar\K_{t-1}$, then it already holds for $t$.
	Otherwise, consider any separating hyperplane $(w,b) \in \reals^d\times\reals$ obtained in round $t$, which is the output of $\mathcal{SOD}(x', 2\gamma R,W)$ for some $x'$. By the guarantee of the SOD oracle, we have 
	\[ w^\top x' \le b-2\gamma R \le \min_{x^*\in\K^*} w^\top x^* -2\gamma R.\]
	This implies
	\[ (1-\gamma)\K^*+ \gamma \bar\K_0 \subseteq  (1-\gamma)\K^*+ \gamma B(0, R) \subseteq \K^*+ B(0,2\gamma R)\subseteq \{x\in\reals^d: w^\top x\ge b-2\gamma R\}.\]
	Note that $\{x\in\reals^d: w^\top x\ge b-2\gamma R\}$ is exactly the half-space to intersect with when updating $\K_t$.
	Hence we know that during the execution of the algorithm, $\K_t$ is always a superset of $(1-\gamma)\K^*+ \gamma \bar\K_0$.
	This proves \eqref{item:cmwu-inproof-claim-1}.
	
	For \eqref{item:cmwu-inproof-claim-2}, note that each time $\K_t$ is updated, the mean of the distribution over $\K_t$ with density proportional to $z_t(x)$ is not included in the interior of the new $\K_t$. By Lemma~\ref{lem:center-cut-to-use}, this implies $\int_{\text{new } \K_t}z_t(x)\dx \le \left(1 - \frac{1}{2e}\right) \int_{\text{old } \K_t}z_t(x)\dx$.
	Hence we have $-\delta_t = \log \frac{Z_t}{Z_t'} \le \log \left(1 - \frac{1}{2e}\right)^{s_t}$, which gives $\delta_t\ge \frac{s_t}{5}$.

%

	Now we show the regret bound.
	From \eqref{item:cmwu-inproof-claim-1} we know
	\[(1-\gamma)\K^*+\gamma \bar \K_0 \subseteq \bar\K_T \subseteq \bar\K_{T-1}\subseteq \cdots\subseteq \bar\K_1 \subseteq \bar\K_0 \subseteq B(0, R).\]
	Therefore, we can apply Theorem~\ref{thm:CMWU_general} to get
	\begin{align*}
	\forall x^*\in \K^*: \quad \sum_{t=1}^T (f_t(\bar x_t)-f_t(x^*)) &\le \frac{d\log \frac{1}{\gamma}}{\eta} + \eta L^2R^2T + \gamma LRT- \frac{\sum_{t=1}^{T}\delta_t}{\eta} \\
	&\le \frac{d\log \frac{1}{\gamma}}{\eta} + \eta L^2R^2T + \gamma LRT- \frac{s}{5\eta},
	\end{align*}
	where the second inequality is due to \eqref{item:cmwu-inproof-claim-2}.
	
	The actual algorithm does not play $\bar x_t$, but a random $\tilde x_t$.
	Namely, letting $(V,p) \in \reals^{d\times k} \times \Delta^{k-1}$ be the output of $\mathcal{SOD}(\bar x_t,2\gamma R,W)$, we have that $\tilde x_t$ is equal to
	 $v_i$ with probability $p_i$ ($i=1, \ldots, k$), where $V = (v_1, \ldots, v_k)$.
	By Theorem~\ref{thm:SOD} we know that there exists $ c\in W^\circ$ such that $\norm{\sum_{i=1}^k p_iv_i +c-\bar x_t }\le  6\gamma R$, which implies (note $f_t\in W \cap B(0, L)$)
	\[
	\E[f_t(\tilde x_t)]=f_t\left(\sum_{i=1}^k p_iv_i\right)
	\le f_t\left(\sum_{i=1}^k p_iv_i +c\right) \le f_t(\bar x_t)+6\gamma LR.
	\] 
	Therefore we have
	\begin{align*}
	\forall x^*\in\K^*: \E\left[\sum_{t=1}^T ( f_t(\tilde x_t)-f_t(x^*) ) \right]
	 =\,& \E\left[\sum_{t=1}^T ( f_t(\tilde x_t)-f_t(\bar x_t) ) \right] +\sum_{t=1}^T(f_t(\bar x_t)-f_t(x^*))\\
	 \le\, & 6\gamma LRT + \frac{d\log \frac{1}{\gamma}}{\eta} + \eta L^2R^2T + \gamma LRT- \frac{s}{5\eta}\\
	=\, & \frac{d\log \frac{1}{\gamma}}{\eta} + \eta L^2R^2T + 7\gamma LRT- \frac{s}{5\eta}.
	\end{align*}

	Setting  $\gamma = \frac{1}{T}$ and $\eta = \frac{1}{LR} \min\left\{1, \sqrt{\frac{d\log T}{T}} \right\}$, the above bound becomes
	\begin{align*}
	\forall x^*\in\K^*: \E\left[\sum_{t=1}^T ( f_t(\tilde x_t)-f_t(x^*) ) \right]
	\le LR\left(7+ 2\max\left\{\sqrt{dT \log T}, d\log T \right\}\left( 1 - \frac{s}{10d\log T}  \right) \right).
	\end{align*}
	We can use the above regret bound to bound the number of oracle calls in Algorithm~\ref{alg:improper_CMWU}.
	Since the regret is always lower bounded by $-2LRT$, the above regret upper bound implies $s = O\left( T \right)$.
	Therefore Algorithm~\ref{alg:improper_CMWU} calls the SOD oracle for $s+T= O\left( T \right)$ times.
	Note that
	each implementation of SOD needs to call $\O_{\K, \K^*}$ for $O\left( d\log \frac{4R}{2\gamma R} \right) = O(d\log T)$ times (Theorem~\ref{thm:SOD}). 
	We conclude that the total number of calls to $\O_{\K, \K^*}$ in Algorithm~\ref{alg:improper_CMWU} is at most $O\left(dT\log T\right)$.
\end{proof}

\begin{remark*}
	Intuitively, when the SOD oracle is called in Algorithm~\ref{alg:improper_CMWU}, between the two outcomes (separation and decomposition) we should prefer decomposition, since this means we can make the play and move on to the next iteration.
	However, Theorem~\ref{thm:improper_CMWU} shows an interesting {trade-off between oracle complexity and regret}: the more oracle calls, the less the regret. This means obtaining separating hyperplanes helps the regret.
	Interestingly, we obtain our upper bound on the oracle calls by observing that regret can never be lower by $-2LRT$.
\end{remark*}


\section{$\alpha$-Regret Minimization in the Bandit Setting}\label{sec:bandit}

In this section we consider the $\alpha$-regret minimization problem in the bandit setting, where $W = \reals_+^d$, $\K \subseteq \reals_+^d \cap B(0, R)$ and $\K^* = \alpha \K$.
Similar to \citep{kakade2009playing}, we assume we know a \emph{$\beta$-barycentric spanner} for $\K$.
This concept was first introduced by \cite{awerbuch2004adaptive}.

\begin{definition}[Barycentric spanner]
	A set of $d$ linearly independent vectors $\{q_1,\ldots,q_d\}\subset\reals^d$ is a $\beta$-barycentric spanner for a set $\K\subset \reals^d$, denoted by $\beta$-BS$(\K)$, if $\{q_1,\ldots,q_d\}\subseteq \K$ and for all $x\in\K$, there exist $\beta_1, \ldots, \beta_d \in[-\beta,\beta]$ such that $x= \sum_{i=1}^d \beta_iq_i$.
\end{definition}

Given $\{q_1,\ldots,q_d\}$ which is a $\beta$-BS$(\K)$,
define $Q := \sum_{i=1}^d q_iq_i^\top$ and
$M := (q_1, \ldots, q_d) \in \reals^{d\times d}$.
Then we have $Q = MM^\top$ and $Me_i = q_i, M^{-1}q_i = e_i$ ($\forall i\in[d]$), where $e_i$ is the $i$-th standard unit vector in $\reals^d$.

\paragraph{The need for a new regularization.}  The bandit algorithm of \cite{garber2017efficient} additionally requires a certain boundedness property of barycentric spanners, namely:
$$ \max_{i \in [d]}  q_i ^\top  Q^{-2} q_i \le \chi .$$
However, for certain bounded sets this quantity may be unbounded, such as the two-dimensional axis-aligned rectangle with one axis being of size unity, and the other arbitrarily small. This unboundedness creates problems with the unbiased estimator of loss vector, whose variance can be as large as certain geometric properties of the decision set. To circumvent this issue, we design a new regularizer called \emph{barycentric regularizer}, which gives rise to an unbiased estimator coupled with an online mirror descent variant that automatically ensures constant variance. 


Similar to \citep{kakade2009playing, garber2017efficient}, our bandit algorithm also simulates the full information algorithm with estimated loss vectors.
Namely, our algorithm implements Algorithm~\ref{alg:OMD} with a specific \emph{barycentric regularizer} $\varphi(x) = \frac12 x^\top Q^{-1}x$.
The algorithm is detailed in Algorithm~\ref{alg:bandit}, and its regret guarantee is given in Theorem~\ref{thm:bandit_reg}.

\begin{algorithm}[htbp]
\caption{Online Stochastic Mirror Descent with Barycentric Regularization }
\label{alg:bandit}
\begin{algorithmic}[1]
\REQUIRE Learning rate $\eta>0$, tolerance $\epsilon>0$, $\{q_1,\ldots, q_d\}$ - a $\beta$-BS($\K$) for some $\beta>0$, exploration probability $\gamma\in(0,1)$, time horizon $T \in \mathbb N_+$
\STATE Instantiate Algorithm \ref{alg:OMD} with parameters $\eta$, $\epsilon$, $\varphi(x)=\frac{1}{2}x^\top Q^{-1}x$, $W' =(M^\top)^{-1}\reals^d_+$, and $T$
	\FOR{$t=1$ \TO $T$}
    	\STATE Receive $\tilde{x}_t$ (the point to play in round $t$) from Algorithm \ref{alg:OMD}
		\STATE $b_t\gets \begin{cases}
		\textrm{EXPLORE}, & \text{with probability } \gamma \\
		\textrm{EXPLOIT}, & \text{with probability } 1-\gamma
		\end{cases}$
		\IF{$b_t=\textrm{EXPLORE}$}
        	\STATE Sample $i_t\in [d]$ uniformly at random, and play $q_{i_t}$ 
            \STATE Receive loss $l_t =q_{i_t}^\top f_t$
            \STATE $\tilde{f_t} \gets \frac{d}{\gamma}l_t Q^{-1}q_{i_t}$
        \ELSE
        	\STATE Play $\tilde{x}_t$ and receive loss $l_t = \tilde{x}_t^\top f_t$
            \STATE $\tilde{f_t}\gets 0$
        \ENDIF
     	\STATE Feed $\tilde{f_t}$ to Algorithm \ref{alg:OMD} as the loss vector for round $t$ (Note that when $\tilde{f_t}=0$,
     	in the next round
     	 Algorithm \ref{alg:OMD} can simply play according to the distribution computed in this round without any oracle calls.)
 	\ENDFOR
\end{algorithmic}
\end{algorithm}

\begin{theorem}\label{thm:bandit_reg}
	
	Denote by $z_t$ the point played by Algorithm~\ref{alg:bandit} in round $t$.
	Then for any $\gamma \in (0, 1)$, $\epsilon \in (0,\alpha R]$ and $\eta>0$, Algorithm~\ref{alg:bandit} satisfies the following regret guarantee: 
\[
\forall x^*\in \K:\quad \E\left[\sum_{t=1}^T (  f_t(z_t)-\alpha  f_t(x^*) ) \right] \le \frac{\alpha^2\beta^2d}{2\eta} + \frac{\eta L^2R^2d^2}{2\gamma}T +  2\gamma \alpha LRT+ \epsilon LT ,
\]
and the expected total number of calls to the oracle $\O_{\K}^\alpha$ in $T$ rounds is at most  $$(1+\gamma T)\left( 1+ 5d \log\frac{4\alpha R+2R \sqrt{\alpha^2\beta^2d^2 + \frac{\eta^2L^2R^2d^3}{\gamma}T+6\eta\alpha LRdT}}{\epsilon} \right).$$


In particular, setting $\eta=\frac{\alpha\beta^{4/3}}{LRT^{2/3}}$, $\epsilon=\frac{\alpha R}{T}$ and $\gamma = \frac{\beta^{2/3}d}{T^{1/3}}$ (assuming $T>\beta^2d^3$ so $\gamma<1$), we have 
\[
\forall x^*\in \K:\quad \E\left[\sum_{t=1}^T (  f_t(z_t)-\alpha  f_t(x^*) ) \right] \le  \alpha  LR \left(3d (\beta T)^{2/3}+1\right), 
\]
and the expected total number of oracle calls in $T$ rounds is at most
 $O\left(d^2(\beta T)^{2/3}\log T\right)$.
\end{theorem}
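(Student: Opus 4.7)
The plan is to instantiate Algorithm~\ref{alg:OMD} with the \emph{barycentric regularizer} $\varphi(x) = \tfrac{1}{2} x^\top Q^{-1} x$ and cone $W' = (M^\top)^{-1}\reals^d_+$, feeding it the estimator $\tilde f_t$, and then translate the resulting regret bound back to the true losses while accounting for exploration. First, the preconditions of Theorem~\ref{thm:OMD_regret} follow from easy computations: $\nabla^2 \varphi(x) = Q^{-1}$ is constant, so PNIP reduces to checking $w^\top Q w' \ge 0$ for $w, w' \in W'$; writing $w = (M^\top)^{-1}u$, $w' = (M^\top)^{-1}v$ with $u, v \in \reals^d_+$ and using $Q = MM^\top$ turns this into $u^\top v \ge 0$. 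The estimator $\tilde f_t = \tfrac{d}{\gamma} l_t (M^\top)^{-1} e_{i_t}$ lies in $W'$ since $l_t = q_{i_t}^\top f_t \ge 0$, and $\E[\tilde f_t \mid \mathcal{F}_{t-1}] = f_t$ by a direct computation.

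The two quantities driving the regret are the regularizer offset and the Bregman drift. For the offset, expanding $x^* = \sum_i \beta_i q_i$ via the barycentric spanner gives $\varphi(\alpha x^*) = \tfrac{\alpha^2}{2}\|M^{-1} x^*\|^2 \le \tfrac{\alpha^2\beta^2 d}{2}$. For the drift, the OMD step $\nabla \varphi(y_{t+1}) = \nabla \varphi(x_t) - \eta \tilde f_t$ unfolds to $y_{t+1} = x_t - \eta Q \tilde f_t$, giving $D_\varphi(x_t, y_{t+1}) = \tfrac{\eta^2}{2}\tilde f_t^\top Q \tilde f_t$; the identity $q_{i_t}^\top Q^{-1} q_{i_t} = e_{i_t}^\top e_{i_t} = 1$ collapses this to $\tfrac{\eta^2 d^2 l_t^2}{2\gamma^2}$ in explore rounds and $0$ otherwise, so averaging over $i_t$ and using $l_t^2 \le L^2 R^2$ yields $\E[D_\varphi(x_t, y_{t+1})] \le \tfrac{\eta^2 d^2 L^2 R^2}{2\gamma}$. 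I will then adapt the proof of Theorem~\ref{thm:OMD_regret} to the stochastic loss. The only step that is sensitive to $\|\tilde f_t\|$ is the PAD-residual bound $f_t(\bar x_t) - f_t(x_t) \le \epsilon L$: after a conditional expectation, $\E[\tilde f_t(\bar x_t) - \tilde f_t(x_t) \mid \mathcal{F}_{t-1}] = f_t(\bar x_t) - f_t(x_t)$ because the round-$t$ estimator's randomness is independent of $\bar x_t$ and $x_t$, and then the PAD residual $c \in (W')^\circ = M\reals^d_+$ sits inside $\reals^d_+$ because every column of $M$ is a barycentric spanner vector $q_i \in \K \subseteq \reals^d_+$, so $f_t^\top c \ge 0$ and the residual contribution is exactly $\epsilon L T$ as in the original proof. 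Assembling the pieces gives $\E[\sum_t f_t(\bar x_t) - f_t(\alpha x^*)] \le \tfrac{\alpha^2\beta^2 d}{2\eta} + \tfrac{\eta L^2 R^2 d^2 T}{2\gamma} + \epsilon L T$.

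Finally, the actually played $z_t$ differs from $\tilde x_t$ only in explore rounds, and $\E[f_t(z_t)] - f_t(\bar x_t) = \gamma(\tfrac{1}{d}\sum_i f_t(q_i) - f_t(\bar x_t))$ is $O(\gamma \alpha L R)$ per round since $0 \le f_t(\cdot) \le LR$ on $\K$, producing the stated regret bound. For the oracle count, the key observation is that in any exploit round $\tilde f_t = 0$, so OMD's internal state is frozen and the PAD decomposition from the previous round can be replayed without new oracle calls; PAD is therefore invoked only at initialization plus in each explore round, for $1+\gamma T$ invocations in expectation. Each invocation costs at most $O(d\log(\cdot))$ oracle calls via Theorem~\ref{thm:projection}, and I will bound the argument of the logarithm by mirroring the derivation of Theorem~\ref{thm:omd-number-calls} with the stochastic losses, plugging in the bounds on $\varphi(\alpha x^*)$, $\sum_t \E[D_\varphi(x_t, y_{t+1})]$, and the exploration slack to recover the expression $\alpha^2\beta^2 d^2 + \tfrac{\eta^2 L^2 R^2 d^3}{\gamma} T + 6\eta \alpha LR dT$ under the square root. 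The most delicate point is the residual step above: without the fortuitous containment $(W')^\circ \subseteq \reals^d_+$, the residual contribution would be $\epsilon\,\E[\|\tilde f_t\|]\,T$ rather than $\epsilon L T$, which could blow up by a factor depending on the conditioning of $M$ and destroy the regret guarantee.
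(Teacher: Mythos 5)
Your proposal is correct and takes essentially the same approach as the paper's proof: instantiating Algorithm~\ref{alg:OMD} with $\varphi(x)=\frac{1}{2}x^\top Q^{-1}x$ and $W'=(M^\top)^{-1}\reals^d_+$, verifying PNIP and unbiasedness, bounding $\varphi(\alpha x^*)\le \alpha^2\beta^2 d/2$ and $\E[\tilde f_t^\top Q \tilde f_t]\le L^2R^2d^2/\gamma$, using $(W')^\circ\subseteq\reals^d_+$ to keep the PAD residual at $\epsilon L$ per round, adding the exploration cost, and charging oracle calls only to the rounds following explorations. The one piece you defer---the argument of the logarithm in the oracle-call bound---is planned exactly as the paper executes it (retain the $D_\varphi(\alpha x^*, y_{T+1})$ term to bound $\E[D_\varphi(\alpha x^*,y_t)]$, use $\mu\ge 1/(R^2 d)$, and apply Jensen's inequality to move the expectation inside the logarithm), so nothing essential is missing.
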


\begin{proof} 
	Let $x_t$, $y_t$ and $\tilde{x}_t$ be the same $x_t$, $y_t$ and $\tilde{x}_t$ appearing in Algorithm~\ref{alg:OMD} during our implementation. We define $\overline x_t:=\E[\tilde x_t| y_t]$ similarly to the proof of Theorem~\ref{thm:OMD_regret}.
	It is easy to see that $(\varphi, W')$ satisfies the PNIP property (Definition~\ref{def:pnip}) and $\tilde f_t \in W'$ for all $t\in[T]$, where $W' = (M^\top)^{-1}\reals_+^d$.

    Note that $\nabla \varphi(x)=Q^{-1}x$, which implies
    \begin{align*}
    D_{\varphi}(x_t,y_{t+1}) &= \frac12 (x_t-y_{t+1})^\top Q^{-1} (x_t-y_{t+1}) 
    = \frac12 (\nabla \varphi(x_t)-\nabla \varphi(y_{t+1}))^\top Q (\nabla \varphi(x_t)-\nabla \varphi(y_{t+1}))\\
    & = \frac{\eta^2}{2} \tilde{f}_t^\top Q \tilde{f}_t.
    \end{align*}
    Using the regret bound \eqref{eqn:omd-inproof-1.5} in the proof of Theorem~\ref{thm:OMD_regret}, for any $x^*\in \K$ we have 
	\begin{equation} \label{eqn:bandit-inproof-1}
	\begin{aligned}
	\sum_{t=1}^T \left( \tilde f_t( x_t)-\alpha \tilde f_t(x^*) \right)
	&\le \frac{1}{\eta}\left(D_\varphi(\alpha x^*, y_1) - D_\varphi(\alpha x^*, y_{T+1}) +\sum_{t=1}^T D_{\varphi}(x_t,y_{t+1})\right) \\
	&\le \frac{1}{\eta}\left(\varphi(\alpha x^*) - \min_{y\in\reals^d}\varphi(y) +\sum_{t=1}^T D_{\varphi}(x_t,y_{t+1})\right) \\
	&= \frac1\eta  \varphi(\alpha x^*) + \frac{\eta}{2} \sum_{t=1}^T  \tilde f_t^\top Q \tilde f_t  .
	\end{aligned}
	\end{equation}
	Since $\{q_i\}_{i=1}^d$ is a $\beta$-BS($\K$), there exist $\beta_1, \ldots, \beta_d \in[-\beta,\beta]$ such that $x^*=\sum_{i=1}^d\beta_iq_i$.  Then we have
	\begin{equation*} 
	\varphi(\alpha x^*)
	=\frac{1}{2}(\alpha x^*)^\top Q^{-1}(\alpha x^*)
	=\frac{\alpha^2}{2}\norm{M^{-1}x^*}^2
	=\frac{\alpha^2}{2}\left\|\sum_{i=1}^d\beta_iM^{-1}q_i\right\|^2
	=\frac{\alpha^2}{2}\left\|\sum_{i=1}^d\beta_ie_i\right\|^2
	\le \frac{\alpha^2\beta^2d}{2}.
	\end{equation*}
	We also have
	\begin{align*}
	\E\left[\tilde{f}_t^\top Q \widetilde{f_t}\right]
	&= \gamma \sum_{i=1}^d \frac1d \left(\frac{d}{\gamma} q_i^\top f_t \right)^2 q_i^\top Q^{-1} Q Q^{-1}q_i =
	\frac{d}{\gamma}\sum\limits_{i=1}^d \left(q_i^\top f_t \right)^2q_{i}^\top Q^{-1} q_{i} \\
	&\le \frac{L^2R^2d}{\gamma}\sum\limits_{i=1}^d q_{i}^\top (MM^\top)^{-1} q_{i}
	= \frac{L^2R^2d}{\gamma} \sum\limits_{i=1}^d e_i^\top e_i =\frac{L^2R^2d^2}{\gamma}.
	\end{align*}
	Hence by taking expectation on \eqref{eqn:bandit-inproof-1} we get
	\begin{equation}\label{eqn:bandit-inproof-2}
	\E\left[ \sum_{t=1}^T \left( \tilde f_t( x_t)-\alpha \tilde f_t(x^*) \right) \right]
	\le \frac{\alpha^2\beta^2d}{2\eta} + \frac{\eta L^2R^2d^2}{2\gamma}T.
	\end{equation}
    Note that $\E[\tilde f_t |  x_t] = \gamma \sum_{i=1}^d \frac1d \frac{d}{\gamma}  Q^{-1}q_iq_i^\top f_t = Q^{-1} \left(\sum_{i=1}^d q_iq_i^\top \right)f_t = f_t$.
    Therefore \eqref{eqn:bandit-inproof-2} becomes
    \begin{equation}\label{eqn:bandit-inproof-3}
    \E\left[ \sum_{t=1}^T \left(  f_t( x_t)-\alpha  f_t(x^*) \right) \right]
    \le \frac{\alpha^2\beta^2d}{2\eta} + \frac{\eta L^2R^2d^2}{2\gamma}T.
    \end{equation}
    Next, by the guarantee of the PAD oracle, for any $t\in [T]$ we know that there exists $c_t \in (W')^\circ$ such that $\norm{\overline{x}_t + c_t - x_t}\le \epsilon$.
    It is easy to see that $\reals_+^d \subseteq W'$, which implies $(W')^\circ \subseteq \reals_+^d$, so we know $c_t \in \reals_+^d$. Then we have 
    \[
    \forall t\in[T]:\quad
    \E\left[ f_t(\overline x_t)-f_t(x_t) \right] = \E\left[ f_t^\top(\overline x_t-x_t) \right]
    \le \E\left[ f_t^\top(\overline x_t+c_t-x_t) \right]
    \le \epsilon L .
    \]
	Thus \eqref{eqn:bandit-inproof-3} implies
	\begin{equation} \label{eqn:bandit-inproof-4}
	\E\left[\sum_{t=1}^T (  f_t(\overline x_t)-\alpha  f_t(x^*) ) \right]
	\le \frac{\alpha^2\beta^2d}{2\eta} + \frac{\eta L^2R^2d^2}{2\gamma}T + \epsilon LT.
	\end{equation}

	Finally, since the point played in round $t$, $z_t$, is equal to $\tilde x_t$ (whose expectation is $\overline x_t$) with probability $1-\gamma$, we have
	\begin{align*}
	\E\left[\sum_{t=1}^T (  f_t(z_t)-\alpha  f_t(x^*) ) \right]
	&\le (1-\gamma) \E\left[\sum_{t=1}^T (  f_t(\widetilde x_t)-\alpha  f_t(x^*) ) \right] + \gamma \cdot 2\alpha LRT \\
    &= (1-\gamma) \E\left[\sum_{t=1}^T (  f_t(\overline x_t)-\alpha  f_t(x^*) ) \right] + 2\gamma\alpha LRT \\
	&\le \frac{\alpha^2\beta^2d}{2\eta} + \frac{\eta L^2R^2d^2}{2\gamma}T + \epsilon LT +  2\gamma \alpha LRT.
	\end{align*}

	\paragraph{Oracle complexity.} Using Theorem~\ref{thm:projection}, we know that when $b_t = \textrm{EXPLORE}$, the number of calls to the oracle $\O_\K^\alpha$ in round $t$ is at most
	$\left\lceil 5d \log\frac{4\alpha R+2\sqrt{\frac2\mu \min_{x^* \in \K}D_\varphi(\alpha x^*, y_t)}}{\epsilon} \right\rceil$, where $\mu$ is the strong convexity parameter of $\varphi$.
    
    In the above proof of the regret bound, we have ignored the term $D_\varphi(\alpha x^*, y_{T+1})$ in \eqref{eqn:bandit-inproof-1}.
    If we instead keep this term, the regret bound \eqref{eqn:bandit-inproof-4} will become
    \begin{equation*}
    \forall x^*\in\K:\quad
    \E\left[\sum_{t=1}^T (  f_t(\overline x_t)-\alpha  f_t(x^*) ) \right]
    \le \frac{\alpha^2\beta^2d}{2\eta} + \frac{\eta L^2R^2d^2}{2\gamma}T + \epsilon LT - \frac1\eta \E\left[ D_\varphi(\alpha x^*, y_{T+1}) \right].
    \end{equation*}
    In the above inequality, substituting $T$ with $t$, we have
    \begin{align*}
    \forall t\in[T], \forall x^*\in\K:\quad \E\left[ D_\varphi(\alpha x^*, y_{t+1}) \right] &\le \frac{\alpha^2\beta^2d}{2} + \frac{\eta^2 L^2R^2d^2}{2\gamma}t + \epsilon\eta Lt - \eta \E\left[\sum_{j=1}^t (  f_j(\overline x_j)-\alpha  f_j(x^*) ) \right] \\
    &\le \frac{\alpha^2\beta^2d}{2} + \frac{\eta^2 L^2R^2d^2}{2\gamma}T + \epsilon\eta LT + \eta \cdot 2\alpha LRT \\
     &\le \frac{\alpha^2\beta^2d}{2} + \frac{\eta^2 L^2R^2d^2}{2\gamma}T + 3\eta\alpha LRT.
    \end{align*}
    The above upper bound is also clearly valid for $D_\varphi(\alpha x^*, y_1) $.

    Since $\varphi(x) = \frac12 x^\top Q^{-1}x$ is quadratic, we know that $\mu = \lambda_{\min}(Q^{-1}) = \frac{1}{\lambda_{\max}(Q)} = \frac{1}{\max_{u\in\reals^d, \norm{u}=1}\norm{Qu}} = \frac{1}{\max_{u\in\reals^d, \norm{u}=1}\norm{\sum_{i=1}^d q_iq_i^\top u}} \ge \frac{1}{\sum_{i=1}^d\norm{q_i}^2} \ge \frac{1}{R^2d}$,
	where $\lambda_{\min}(P)$ and $\lambda_{\max}(P)$ are respectively the smallest and the largest eigenvalues of a symmetric matrix $P$.
    
    Note that $\log(a+\sqrt{x})$ is a concave function in $x$ for $a>0$. By Jensen's inequality, the expected number of calls to the oracle $\O^{\alpha}_\K$ in round $t$ when $b_t = \textrm{EXPLORE}$ is upper bounded by:
    \begin{align*}
    &\E\left[1+5d \log\frac{4\alpha R+2\sqrt{\frac2\mu \min\limits_{x^* \in \K}D_\varphi(\alpha x^*, y_t)}}{\epsilon}\right] \\
    \le\ &1+\min_{x^* \in \K}\E\left[5d \log\frac{4\alpha R+2\sqrt{\frac2\mu D_\varphi(\alpha x^*, y_t)}}{\epsilon}\right]\\
    \le\ &1+\min_{x^* \in \K}5d \log\frac{4\alpha R+2\sqrt{\frac2\mu \E\left[D_\varphi(\alpha x^*, y_t)\right]}}{\epsilon}\\
    \le\ &1+ 5d \log\frac{4\alpha R+2 \sqrt{2R^2d\left(\frac{\alpha^2\beta^2d}{2} + \frac{\eta^2L^2R^2d^2}{2\gamma}T+3\eta\alpha LRT\right)}}{\epsilon}\\
    =\ &1+ 5d \log\frac{4\alpha R+2R \sqrt{\alpha^2\beta^2d^2 + \frac{\eta^2L^2R^2d^3}{\gamma}T+6\eta\alpha LRdT}}{\epsilon}.
    \end{align*}
	Therefore the expected total number of calls to the oracle $\O_\K^\alpha$ in $T$ rounds is at most
	\begin{align*}
	 (1+\gamma T)\left( 1+ 5d \log\frac{4\alpha R+2R \sqrt{\alpha^2\beta^2d^2 + \frac{\eta^2L^2R^2d^3}{\gamma}T+6\eta\alpha LRdT}}{\epsilon} \right).
	\end{align*}
	
	The second part of the theorem can be directly verified using the specific choices of $\eta$, $\epsilon$ and $\gamma$ and noting $\log(\poly(\beta d)) = O\left( \log T \right)$ since $T>\beta^2d^3$.
\end{proof}
%
%
%

	

\section{Conclusion and Open Problems}

We have described two different algorithmic approaches to reducing regret minimization to
 offline approximation algorithms and maintaining optimal regret and poly-logarithmic oracle complexity per iteration, resolving previously stated open questions. 

An intriguing open problem remaining is to find an efficient algorithm in the bandit setting that guarantees both $\tilde{O}(\sqrt{T})$ regret and $\poly(\log T)$ oracle complexity per iteration (at least on average).  




\bibliography{references}

\begin{thebibliography}{}

\bibitem[Awerbuch and Kleinberg, 2004]{awerbuch2004adaptive}
Awerbuch, B. and Kleinberg, R.~D. (2004).
\newblock Adaptive routing with end-to-end feedback: Distributed learning and
  geometric approaches.
\newblock In {\em Proceedings of the thirty-sixth annual ACM symposium on
  Theory of computing}, pages 45--53. ACM.

\bibitem[Balcan and Blum, 2006]{balcan2006approximation}
Balcan, M.-F. and Blum, A. (2006).
\newblock Approximation algorithms and online mechanisms for item pricing.
\newblock In {\em Proceedings of the 7th ACM Conference on Electronic
  Commerce}, pages 29--35. ACM.

\bibitem[Cesa-Bianchi and Lugosi, 2006]{cesa2006prediction}
Cesa-Bianchi, N. and Lugosi, G. (2006).
\newblock {\em Prediction, learning, and games}.
\newblock Cambridge university press.

\bibitem[Cover, 1991]{cover1991universal}
Cover, T.~M. (1991).
\newblock Universal portfolios.
\newblock {\em Mathematical Finance}, 1(1):1--29.

\bibitem[Dud{\'i}k et~al., 2016]{dudik2016oracle}
Dud{\'i}k, M., Haghtalab, N., Luo, H., Schapire, R.~E., Syrgkanis, V., and
  Vaughan, J.~W. (2016).
\newblock Oracle-efficient online learning and auction design.
\newblock {\em arXiv preprint arXiv:1611.01688}.

\bibitem[Fujita et~al., 2013]{fujita2013combinatorial}
Fujita, T., Hatano, K., and Takimoto, E. (2013).
\newblock Combinatorial online prediction via metarounding.
\newblock In {\em International Conference on Algorithmic Learning Theory},
  pages 68--82. Springer.

\bibitem[Garber, 2017]{garber2017efficient}
Garber, D. (2017).
\newblock Efficient online linear optimization with approximation algorithms.
\newblock In {\em Advances in Neural Information Processing Systems}, pages
  627--635.

\bibitem[Hazan, 2016]{hazan2016introduction}
Hazan, E. (2016).
\newblock Introduction to online convex optimization.
\newblock {\em Foundations and Trends{\textregistered} in Optimization},
  2(3-4):157--325.

\bibitem[Hazan and Koren, 2016]{hazan2016computational}
Hazan, E. and Koren, T. (2016).
\newblock The computational power of optimization in online learning.
\newblock In {\em Proceedings of the forty-eighth annual ACM symposium on
  Theory of Computing}, pages 128--141. ACM.

\bibitem[Kakade et~al., 2009]{kakade2009playing}
Kakade, S.~M., Kalai, A.~T., and Ligett, K. (2009).
\newblock Playing games with approximation algorithms.
\newblock {\em SIAM Journal on Computing}, 39(3):1088--1106.

\bibitem[Kalai and Vempala, 2005]{kalai2005efficient}
Kalai, A. and Vempala, S. (2005).
\newblock Efficient algorithms for online decision problems.
\newblock {\em Journal of Computer and System Sciences}, 71(3):291--307.

\bibitem[Lov{\'a}sz and Vempala, 2007]{lovasz2007geometry}
Lov{\'a}sz, L. and Vempala, S. (2007).
\newblock The geometry of logconcave functions and sampling algorithms.
\newblock {\em Random Structures \& Algorithms}, 30(3):307--358.

\bibitem[Pr\'ekopa, 1973]{prekopa73}
Pr\'ekopa, A. (1973).
\newblock On logarithmic concave measures and functions.
\newblock {\em Acta Scientiarum Mathematicarum}, 34:335--343.

\bibitem[Shalev-Shwartz, 2012]{shalev2012online}
Shalev-Shwartz, S. (2012).
\newblock Online learning and online convex optimization.
\newblock {\em Foundations and Trends{\textregistered} in Machine Learning},
  4(2):107--194.

\bibitem[Vovk, 1990]{Vovk:1990:AS:92571.92672}
Vovk, V.~G. (1990).
\newblock Aggregating strategies.
\newblock In {\em Proceedings of the Third Annual Workshop on Computational
  Learning Theory}, COLT '90, pages 371--386.

\end{thebibliography}
\bibliographystyle{apalike}

\appendix

\section*{\LARGE Appendix}


\section{Proof of Lemma~\ref{lem:center_cut}}
\label{app:proof-prelim}

\begin{proof}[Proof of Lemma~\ref{lem:center_cut}]
	Let $H = \{ x \in \mathbb{R}^d:  w^\top x  \ge b \}$ for a unit vector $w\in\reals^d$ and $b\in \reals$, and assume without loss of generality that $x^*  = 0$. Consider the one-dimensional random variable $Y :=  w^\top X  - b$, where $X\sim p$. Denote by $q: \reals\to\reals$ the density function of $y$. Then we have
	\begin{align*}
	\int_{ H} p(x) \dx = \int_{0}^\infty q(y) \dy .
	\end{align*}
	
	Let $y^* := \E[Y] = w^\top x^* - b = -b$. By our assumption, we know  $|y^*| \le \frac{1}{2e} $. Moreover, since log-concavity is preserved under linear transformations \citep{prekopa73}, we know that $y$ also follows a log-concave distribution, and it is easy to see that it is also isotropic.
	Using Lemma 5.4 in \citep{lovasz2007geometry}, we know $\int_{y^*}^\infty q(y) \dy \ge \frac{1}{e}$.
	In addition, from Lemma 5.5 in \citep{lovasz2007geometry} we know $q(y) \le 1$ ($\forall y\in\reals$).
	Therefore, we have
	\begin{align*}
	\frac1e - \int_{0}^\infty q(y) \dy \le	\int_{y^*}^\infty q(y) \dy - \int_{0}^\infty q(y) \dy \le |y^*| \sup_{y \in \mathbb{R}}q(y) \le \frac{1}{2e},
	\end{align*}
	which implies $\int_{0}^\infty q(y) \dy \ge  \frac{1}{2e}$, completing the proof.
\end{proof}

%
%

\section{Proof of Lemma~\ref{lem:diff_in_dual}}
\label{app:proof-omd}

\begin{proof}[Proof of Lemma~\ref{lem:diff_in_dual}]
		Since $W$ is a convex cone and $\Pi_W(x)\in W$, we have $w+\Pi_W(x)\in W$ ($\forall w\in W$).
		By Lemma~\ref{lem:pythagorean}, we have $(\Pi_W(x)-x)^\top(y-\Pi_W(x))\ge 0$ ($\forall y\in W$). Letting $y=w+\Pi_W(x) \in W$ ($\forall w\in W$), we get $(\Pi_W(x)-x)^\top w\ge 0$, which means $\Pi_W(x)-x\in W^\circ$.
\end{proof}

%
%

\section{Proof of Theorem~\ref{thm:SOD}}\label{app:proof-cmwu}


We first show the following lemma 
using a similar argument in Lemma~\ref{lem:cutting_plane_OMD}.

\begin{lemma}\label{lem:cutting_plane}
	For $x \in B(0, R)$ and $\epsilon \in (0, 2R]$,
	if  $\mathcal{SOD}(x,\epsilon,W)$ returns a decomposition $(V, p)$, then for all unit vector $w\in W$, we have $\min_{i\in [k]} w^\top (v_i-x)\le 3\epsilon$. 
\end{lemma}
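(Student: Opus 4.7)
The argument parallels the proof of Lemma~\ref{lem:cutting_plane_OMD} in the OMD section, with small adjustments to handle the fact that the cutting half-spaces here are displaced by $\epsilon$ from passing through the $v_i$'s (this displacement is exactly why the conclusion weakens from $\epsilon$ to $3\epsilon$). I will argue by contradiction: suppose there is a unit vector $h \in W$ with $h^\top(v_i - x) > 3\epsilon$ for every $i \in [k]$.

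First I will show that an entire neighborhood of $h/2$ inside $W$ survives all $k$ rounds, i.e.\ is contained in $W_{k+1}$. Set $r := \epsilon/(4R)$ and consider an arbitrary $w \in h/2 + (W \cap B(0,r))$. Using $\|v_i-x\| \le \|v_i\|+\|x\| \le 2R$ together with the hypothesis, a direct computation gives $w^\top(v_i-x) > \tfrac{3\epsilon}{2} - 2Rr = \epsilon$ for each $i$. Moreover, since $r \le \tfrac12$ (which holds because $\epsilon \le 2R$) and $W$ is a convex cone, such $w$ lies in $W \cap B(0,1) = W_1$. Inspecting the update rule $W_{i+1} = W_i \cap \{w : w^\top(v_i-x) \ge \epsilon\}$, these inequalities precisely say that $w \in W_{k+1}$. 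Thus
\[
\vol{W_{k+1}} \ge \vol{W \cap B(0,r)} = r^d \vol{W \cap B(0,1)} = r^d \vol{W_1}.
\]

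Next I will bound $\vol{W_{k+1}}$ from above by the standard volume-shrinkage argument. Because the algorithm did \emph{not} return a separating hyperplane at iteration $i$, we have $w_i^\top(v_i - x) < \epsilon$, so the centroid $w_i$ of $W_i$ lies in the discarded half-space $\{w : w^\top(v_i-x) < \epsilon\}$. Applying Lemma~\ref{lem:center-cut-to-use} (any half-space containing the centroid of a log-concave distribution carries at least $1/(2e)$ of the mass) to the uniform distribution on $W_i$ yields $\vol{W_{i+1}} \le (1 - 1/(2e)) \vol{W_i}$, hence
\[
\vol{W_{k+1}} \le (1 - 1/(2e))^k \vol{W_1}.
\]

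Finally I combine the two estimates. Plugging in $k \ge 5d \log(4R/\epsilon)$ and using the numerical inequality $(1 - 1/(2e))^5 < 1/e$, the upper bound becomes strictly less than $(\epsilon/(4R))^d \vol{W_1} = r^d \vol{W_1}$, contradicting the lower bound.

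\textbf{Main obstacle.} There is no deep obstacle; the delicate points are purely arithmetic: choosing $r = \epsilon/(4R)$ so that the $3\epsilon$ slack absorbs both the $\epsilon$ displacement of the cut and the $2Rr$ perturbation around $h/2$, and checking that $\epsilon \le 2R$ keeps the neighborhood inside the unit ball so Lemma~\ref{lem:center-cut-to-use} applies at every iteration.
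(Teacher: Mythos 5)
Your proof is correct and takes essentially the same route as the paper's: the same contradiction setup, the same neighborhood $\frac{h}{2}+(W\cap B(0,r))$ with $r=\epsilon/(4R)$ shown to survive into $W_{k+1}$, and the same per-iteration volume shrinkage by a factor $1-1/(2e)$ via Lemma~\ref{lem:center-cut-to-use}, with $k\ge 5d\log(4R/\epsilon)$ forcing the contradiction. The only cosmetic difference is that you bound $\vol{W_{k+1}}$ from below and above directly rather than writing the paper's single chain $\vol{W_1}\le r^{-d}\vol{W_{k+1}}\le r^{-d}(1-1/(2e))^k\vol{W_1}<\vol{W_1}$.
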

\begin{proof}
	Suppose that there exists a unit vector $h\in W$ such that $\min_{i\in [k]} h^\top (v_i-x)> 3\epsilon$. Note that $\|v_i-x\|\le \|v_i\|+\|x\|\le 2R$. Denoting $r = \frac{\epsilon}{4R}$, we have 
	\[\forall h'\in \frac{h}{2}+(W\cap B(0,r)):\quad \min_{i\in [k]}h'^\top (v_i-x)> \epsilon.\]
	
	Since $r\le \frac{1}{2}$ for $\epsilon\le 2R$, we have $\frac{h}{2}+(W\cap B(0,r))\subseteq \frac{h}{2}+(W\cap B(0,1/2)) \subseteq W\cap B(0,1)=W_1$. Because the algorithm returns a decomposition, we have that after the last iteration,
	\[\forall w\in W_1\setminus W_{k+1}:\quad \exists i\in[k], \textrm{ s.t. } w^\top(v_i-x)\le \epsilon.\]

	Therefore, we must have $\frac{h}{2}+(W\cap B(0,r))\subseteq W_{k+1}$. 
	We also have $\vol{W_{i+1}} \le (1-1/2e) \vol{W_i}$ from Lemma~\ref{lem:center-cut-to-use} since $W_{i+1}$ is the intersection of $W_i$ with a half-space that does not contain $W_i$'s centroid.
	Then we have
	\begin{align*}
	\vol{W_1} &=\vol{W\cap B(0,1)}= r^{-d}\vol{W\cap B(0,r)}\le r^{-d}\vol{W_{k+1}}\\
	&\le r^{-d}(1-1/2e)^{k}\vol{W_1}< \vol{W_1},
	\end{align*}
	where the last step is due to $k \ge 5d \log  \frac1r = 5d \log  \frac{4R}{\epsilon}$. Therefore we have a contradiction.
\end{proof}

\begin{proof}[Proof of Theorem~\ref{thm:SOD}]
	If $\mathcal{SOD}(x,\epsilon,W)$ returns a decomposition $(V, p)$, by Lemmas~\ref{lem:cutting_plane} and \ref{lem:equivalence}, we know that there exists $ c\in W^\circ$ such that $\norm{\sum_{i=1}^kp_iv_i+c-x} \le 3\epsilon$. 
	
	If $\mathcal{SOD}(x,\epsilon,W)$ returns a separating hyperplane $(w, b)$ at iteration $i\in[k]$, we know $w_i^\top x\le w_i^\top v_i -\epsilon$. 
	Since $w = \frac{w_i}{\norm{w_i}}$, $b = w^\top v_i$ and $\norm{w_i}\le1$, we have $w^\top x \le w^\top v_i - \frac{\epsilon}{\norm{w_i}} \le b - \epsilon$.
	By the guarantee of $\mathcal{O}_{\K, \K^*}$, we have $b-\epsilon = w^\top v_i - \epsilon \le \min_{x^*\in K^*}w^\top x^*-\epsilon$.
	
	The number of calls of $\O_{\K, \K^*}$ is clearly upper bounded by $k= \left\lceil 5d\log \frac{4R}{\epsilon} \right\rceil$ since there are at most $k$ iterations and each iteration only calls $\O_{\K, \K^*}$ once.
\end{proof}

\end{document}